\documentclass[11pt]{article}

\usepackage[
  citingstyle=numbers,     
  bibliostyle=unsrtnat,    
  bibfile=references,      
]{brainlab}

\usepackage{amsmath,amsfonts,bm}

\def\eqref#1{equation~\ref{#1}}

\def\1{\bm{1}}

\DeclareMathAlphabet{\mathsfit}{\encodingdefault}{\sfdefault}{m}{sl}
\SetMathAlphabet{\mathsfit}{bold}{\encodingdefault}{\sfdefault}{bx}{n}

\newcommand{\EE}{\mathbb{E}}
\newcommand{\RR}{\mathbb{R}}

\newcommand{\CC}{\mathcal{C}}

\newcommand{\PP}{\mathbb{P}}
\newcommand{\cN}{\mathcal{N}}
\newcommand{\e}{\varepsilon}

\def\<#1,#2>{\left\langle #1,#2 \right\rangle}

\usepackage{amsmath,amsfonts,amssymb,amsthm,mathtools}
\newcommand{\eqdef}{\stackrel{\text{def}}{=}}

\usepackage{hyperref}
\usepackage{url}
\usepackage{wrapfig}
\usepackage{subfig}
\usepackage{booktabs}
\usepackage{tabularx}

\newcommand{\ours}{Grad-BANT}
\newcommand{\oursjl}{\ours{} + JL}
\newcommand{\tsA}{TS-finetuned}
\newcommand{\tsB}{TS-minimum-grad}
\newcommand{\tsC}{TS-momentum}

\newcommand{\nr}{--}

\newcommand{\ok}[1]{{\boldmath\textbf{#1}}}

\setbrainmeta{
  title={On Johnson-Lindenstrauss Transforms for Byzantine-Robust Optimization},
  authors={
    Aleksandr Shestakov\textsuperscript{1, 3}, Vladimir Aletov\textsuperscript{1, 2}, Aleksandr Beznosikov\textsuperscript{1, 2}
  },
  affiliations={
    \textsuperscript{1}Basic Research of Artificial Intelligence Laboratory \\
    \textsuperscript{2}Federated Learning Problems Laboratory\\
    \textsuperscript{3}MBZUAI 
  },
  abstract={
   The growing scale of data and models in machine learning calls for efficient optimization methods. Distributed optimization addresses this challenge, with compression mechanisms helping to reduce communication costs. A classical tool for dimensionality reduction is the Johnson-Lindenstrauss (JL) transform, which is simple to implement and offers strong geometric guarantees. In particular, JL transforms approximately preserve inner products and pairwise distances between vectors with high probability. In this paper, we exploit this property in Byzantine-robust optimization, where worker messages are weighted according to their alignment with a trusted reference gradient. We further support our theoretical findings with experimental results.
  },
}

\begin{document}
\begin{mainpart}

\section{Introduction}
The growing scale of data and computational demands of machine learning \citep{kairouz2021advances} have motivated distributed optimization methods \citep{tang2020communication} for the global objective
\begin{equation}\label{eq:distr_problem}
\min\limits_{x\in\RR^n} \left\{f(x) = \frac{1}{G}\sum\limits_{i\in\mathcal G} f_i(x) \right\},
\end{equation}
where $f_i$ denotes the local loss at worker $i$, $\mathcal G\subseteq\{1,\ldots,M\}$ is the fixed set of honest workers, and $G=|\mathcal G|\geq1$. The objective is the average loss over the honest workers, whose identities need not be known to the server; when all workers are honest, it reduces to the usual average over all $M$ workers. Distributed methods make large-scale training tractable when a single machine is insufficient, but introduce communication bottlenecks, synchronization overhead, and bandwidth limitations that can substantially hinder performance \citep{li2019convergence}.

To mitigate the communication cost, a broad range of compression techniques has been proposed \citep{mcmahan2017communication, alistarh2017qsgd}, such as quantization \citep{yu2019double}, which lowers numerical precision, and sparsification \citep{wangni2018gradient}, which communicates only the most informative coordinates. Workers transmit either compressed gradients \citep{alistarh2017qsgd} or auxiliary sequences that compensate for compression errors across iterations \citep{karimireddy2019error, richtarik2021ef21, mishchenko2024distributed}. In both cases, the gradient estimator is obtained by averaging compressed vectors.

However, some distributed setups rely not on the mean itself, but on geometric relations between gradients. A prominent example is Byzantine-robust optimization \citep{karimireddy2021learning}, in which Byzantine workers attempt to derail training by sending arbitrary adversarial messages. One remedy is to assign reliability weights to workers \citep{cao2020fltrust}, computed from inner products of local gradients with a trusted reference gradient \citep{yan2023recess, molodtsov2025trial}, so that workers not aligned with the reference direction are suppressed:
\begin{equation}\label{eq:byz_iter}
    \omega_i^t = \frac{\max\left\{\left\langle \nabla f_i(x^t), \nabla f_0(x^t)\right\rangle, 0\right\}}{\sum\limits_{j=1}^M \max\left\{\left\langle \nabla f_j(x^t), \nabla f_0(x^t)\right\rangle, 0\right\}},\quad x^{t+1} = x^t - \gamma\sum\limits_{i=1}^M \omega_i^t\nabla f_i(x^t),
\end{equation}
where $f_0$ is a loss function defined on a trusted device. In this illustrative update, the iterate is left unchanged if the denominator is zero, and Byzantine workers may replace their gradients by arbitrary messages.

Such methods depend on preserving $\ell_2$-geometry rather than exact coordinate values, which motivates compression operators that approximately preserve Euclidean structure. Johnson--Lindenstrauss (JL) transforms \citep{johnson1984extensions} map high-dimensional vectors into a lower-dimensional space while approximately preserving pairwise distances and inner products. We focus on stochastic JL transforms, as they are simple to implement in practice. They have been successfully applied in nearest-neighbor search \citep{kushilevitz1998efficient, andoni2015practical}, random partitioning \citep{kleinberg1997two}, linear regression \citep{heinze2016dual, thanei2017random}, clustering \citep{tang2005comparing, boutsidis2010random, dasgupta1999learning, urruty2007clustering, heckel2017dimensionality, guo2020randomized}, differential privacy \citep{upadhyay2014randomness, sheffet2017differentially}, and numerical linear algebra \citep{woodruff2014sketching, musco2020projection, sarlos2006improved}.

In distributed optimization and federated learning, however, JL transforms have mostly been treated as unbiased sketching operators \citep{song2021iterative, shrivastava2024sketching, dai2019hyper}, leading to guarantees in expectation, or derandomized \citep{acharya2019distributed}. These works do not capitalize on the preservation of distances and inner products, and thus do not clearly distinguish JL transforms from standard sparsification or sketching schemes.

\textbf{Our contributions.} In this paper, we
\begin{itemize}
    \item Propose a JL-based communication mechanism for Byzantine-robust distributed optimization methods that rely on gradient alignment with a trusted reference gradient;
    \item Use high-probability control of compressed honest gradients to derive convergence guarantees without convexity, under average data similarity and an explicit accuracy condition on the trusted reference gradient;
    \item Validate our theoretical findings experimentally and demonstrate the practical effectiveness of the proposed approach.
\end{itemize}

\section{Proposed Algorithm}
\subsection{Notation}

We equip $\RR^n$ with the standard Euclidean norm, $\|x\| \eqdef \<x,x>^{1/2}, x \in \RR^n$. Throughout the paper, $f : \RR^n \rightarrow \RR$ denotes a differentiable objective function.

\begin{assumption}[Smoothness]\label{def:smooth}
    Differentiable function $f_0$ has $L$-Lipschitz gradient for some $L>0$, i.e.
    \begin{equation}
    \left\|\nabla f_0(x) - \nabla f_0(y)\right\| \leq L\|x-y\|~~~\forall x,y \in \RR^n.
    \end{equation}
    Moreover, $f_0^{\inf} \eqdef \inf_{x\in\RR^n}f_0(x)>-\infty$.
\end{assumption}

\subsection{JL Transforms}

\begin{definition}[JL Transform \citep{johnson1984extensions}]\label{def:jl}
    Given $\e,\delta \in (0,1)$, a stochastic mapping $h : \RR^n \rightarrow \RR^k$ is called a JL transform if, for all $u,v\in \RR^n$,
    \begin{equation}
        \PP\left[(1-\e)\|u-v\|^2 \leq \|h(u)-h(v)\|^2 \leq (1+\e)\|u-v\|^2\right]\geq 1-\delta.
    \end{equation}
\end{definition}

In words, a JL transform preserves all pairwise distances up to a factor $1\pm\e$ with probability at least $1-\delta$, while reducing the dimension from $n$ to $k$. If $h$ is linear, then $h(0)=0$, and taking $v=0$ shows that norms are preserved as well: $(1-\e)\|u\|^2 \leq \|h(u)\|^2 \leq (1+\e)\|u\|^2$. Since $\<u,v> = \frac{1}{4}\left(\|u+v\|^2-\|u-v\|^2\right)$, preserving the norms of $u+v$ and $u-v$ also preserves inner products.

\begin{lemma}
    If $h$ is a linear JL Transform, then
    $\PP\left[~|\<h(u),h(v)> - \<u,v>|\leq \e\|u\|~\|v\|~\right] \geq 1-2\delta$ for all $u,v\in \RR^n$.
\end{lemma}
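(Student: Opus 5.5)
The plan is to reduce to unit vectors by linearity, apply the JL guarantee of Definition~\ref{def:jl} to two well-chosen pairs of points, combine them with a union bound, and then use the polarization identity $\<x,y> = \frac14\left(\|x+y\|^2-\|x-y\|^2\right)$ quoted just before the statement.

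First, if $u=0$ or $v=0$, then linearity gives $h(u)=0$ or $h(v)=0$. Both inner products then vanish and the event holds with probability one. Otherwise set $a = u/\|u\|$ and $b = v/\|v\|$. Since $h$ is linear for every realization, $\<h(u),h(v)> = \|u\|\,\|v\|\,\<h(a),h(b)>$ and $\<u,v> = \|u\|\,\|v\|\,\<a,b>$. The event in the statement therefore coincides with the event $|\<h(a),h(b)> - \<a,b>| \le \e$, and it suffices to treat unit vectors.

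Next, apply Definition~\ref{def:jl} to the pair $(a,-b)$ and to the pair $(a,b)$. By linearity, $h(a)-h(-b) = h(a+b)$ and $h(a)-h(b) = h(a-b)$. With probability at least $1-\delta$ each, we get
\[
(1-\e)\|a\pm b\|^2 \le \|h(a\pm b)\|^2 \le (1+\e)\|a\pm b\|^2 .
\]
A union bound shows that both hold simultaneously, for the same draw of $h$, with probability at least $1-2\delta$. On this event, polarization applied to $h(a),h(b)$ gives
\[
4\<h(a),h(b)> = \|h(a+b)\|^2 - \|h(a-b)\|^2 \le (1+\e)\|a+b\|^2 - (1-\e)\|a-b\|^2 = 4\<a,b> + \e\left(\|a+b\|^2+\|a-b\|^2\right).
\]
By the parallelogram law, $\|a+b\|^2+\|a-b\|^2 = 2\|a\|^2+2\|b\|^2 = 4$. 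The right-hand side is therefore $4\<a,b>+4\e$. The symmetric estimate gives $4\<h(a),h(b)> \ge 4\<a,b> - 4\e$. Dividing by $4$ and rescaling by $\|u\|\,\|v\|$ finishes the proof.

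The only delicate point is the normalization step. Applying the argument directly to $u,v$ would yield the weaker error $\frac{\e}{2}\left(\|u\|^2+\|v\|^2\right)$ instead of $\e\|u\|\,\|v\|$. This is exactly why linearity of $h$ is used twice: to rescale, and to identify $h(a)\pm h(b)$ with $h(a\pm b)$ so that the pairwise JL guarantee applies.
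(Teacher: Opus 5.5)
Your proposal is correct and follows essentially the same route as the paper's proof: dispose of the zero case, use linearity to reduce to unit vectors, apply the JL guarantee to $a+b$ and $a-b$ with a union bound for failure probability $2\delta$, and conclude via polarization and the parallelogram identity before rescaling by $\|u\|\,\|v\|$. Your explicit identification of $h(a)\pm h(b)$ with $h(a\pm b)$ through the pairs $(a,-b)$ and $(a,b)$ spells out a step the paper leaves implicit.
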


We consider two standard linear constructions.

\begin{definition}[Gaussian JL matrix]\label{def:gjl}
$S \in \RR^{k \times n}$ with i.i.d.\ entries $[S]_{ij} \sim \cN(0, \frac{1}{k})$.
\end{definition}
\begin{definition}[Rademacher JL matrix]\label{def:rjl}
$S \in \RR^{k \times n}$ with i.i.d.\ entries $[S]_{ij} = \pm\frac{1}{\sqrt{k}}$, each with probability $\frac{1}{2}$.
\end{definition}

Both are JL transforms with distortion $\e\in(0,1)$ and failure probability $\delta\in(0,1)$ whenever $k \geq C\e^{-2}\log (2/\delta)$, where $C>0$ is a universal constant \citep{dasgupta2003elementary, achlioptas2003database}. To return to $\RR^n$ for the optimization update, we use the lifted operator $\textup{JL}_S(x) \eqdef S^\top Sx \in \RR^n$.

\begin{lemma}[Unbiasedness]\label{lem:unbias}
    For $S$ as in Definitions \ref{def:gjl} and \ref{def:rjl}, $\EE\, \textup{JL}_S(x) = x$ for all $x \in \RR^n$.
\end{lemma}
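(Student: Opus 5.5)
By linearity of expectation it suffices to show $\EE[S^\top S] = I_n$, since $x$ is deterministic and $\EE\,\textup{JL}_S(x) = \EE[S^\top S]\,x$. We compute this matrix entrywise. Write $s_1,\dots,s_k \in \RR^n$ for the rows of $S$, so that $S^\top S = \sum_{r=1}^k s_r s_r^\top$. The $(i,j)$ entry is then
\begin{equation*}
[S^\top S]_{ij} = \sum_{r=1}^k [S]_{ri}[S]_{rj}.
\end{equation*}

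For both Definition~\ref{def:gjl} and Definition~\ref{def:rjl}, the entries $[S]_{ri}$ are i.i.d., have mean zero, and have variance $\frac{1}{k}$.
\begin{itemize}
    \item For the Gaussian case, the variance is $\frac1k$ by definition.
    \item For the Rademacher case, $[S]_{ri}^2 = \frac1k$ deterministically and $\EE[S]_{ri} = \frac12\cdot\frac{1}{\sqrt k} - \frac12\cdot\frac{1}{\sqrt k} = 0$.
\end{itemize}
Hence, for $i \neq j$, independence within each row gives $\EE\big[[S]_{ri}[S]_{rj}\big] = \EE[S]_{ri}\,\EE[S]_{rj} = 0$. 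For $i = j$, we get $\EE\big[[S]_{ri}^2\big] = \frac1k$. Summing over the $k$ rows yields $\EE[S^\top S]_{ij} = \delta_{ij}$, that is, $\EE[S^\top S] = I_n$.

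Finally, $\EE\,\textup{JL}_S(x) = \EE[S^\top S]\,x = x$. All expectations involved are finite because the entries have finite second moments. No step is a real obstacle; the only care needed is in checking the second moments in the two cases.
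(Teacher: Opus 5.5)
Your proof is correct and follows essentially the same route as the paper. Lemma~\ref{lem:app_iid_core}, applied with $U=I_n$, writes $S^\top S=\frac1k\sum_r a_ra_r^\top$ and uses isotropy of the rows, $\EE[a_ra_r^\top]=I_n$. That is precisely your entrywise computation with independent, mean-zero, variance-$\frac1k$ entries.
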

\begin{lemma}[Second moment]\label{lem:variance}
    For all $x \in \RR^n$, $\EE \|\textup{JL}_{S}(x)\|^2 = \frac{n+k+1}{k}\|x\|^2$ for Gaussian and $\EE \|\textup{JL}_{S}(x)\|^2 = \frac{n+k-1}{k}\|x\|^2$ for Rademacher $S$.
\end{lemma}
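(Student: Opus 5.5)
We compute the second moment directly, writing $\textup{JL}_S(x) = S^\top S x$. The quantity to evaluate is
\[
\EE\|S^\top S x\|^2 = x^\top\, \EE\!\left[S^\top S S^\top S\right] x ,
\]
so it suffices to find the $n\times n$ matrix $\EE[(S^\top S)^2]$. Let $s_1,\dots,s_k\in\RR^n$ denote the rows of $S$; they are i.i.d. Then $S^\top S=\sum_{r=1}^k s_r s_r^\top$, and
\[
(S^\top S)^2=\sum_{r} s_r s_r^\top s_r s_r^\top+\sum_{r\neq q} s_r s_r^\top s_q s_q^\top .
\]
For both constructions $\EE[s_r s_r^\top]=\frac1k I_n$. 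By independence of distinct rows, the cross sum therefore contributes $k(k-1)\cdot\frac{1}{k^2}I_n=\frac{k-1}{k}I_n$.

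\textbf{Diagonal terms.} The main work is the fourth-moment matrix $\EE[\|s\|^2 s s^\top]$ for a single row $s$ with i.i.d. entries of variance $1/k$. Its $(a,b)$ entry is $\sum_c\EE[s_c^2 s_a s_b]$.
\begin{itemize}
\item For $a\neq b$, every summand has an odd factor, so the entry vanishes by symmetry.
\item For $a=b$, the entry equals $\EE[s_a^4]+(n-1)\frac{1}{k^2}$.
\end{itemize}
In the Gaussian case $\EE[s_a^4]=3/k^2$, so $\EE[\|s\|^2ss^\top]=\frac{n+2}{k^2}I_n$. In the Rademacher case $s_a^4=1/k^2$ deterministically, so $\EE[\|s\|^2ss^\top]=\frac{n}{k^2}I_n$. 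Summing over the $k$ rows gives $\frac{n+2}{k}I_n$ and $\frac{n}{k}I_n$ respectively.

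\textbf{Conclusion.} Adding the diagonal and cross contributions gives
\[
\EE[(S^\top S)^2]=\tfrac{n+2+k-1}{k}I_n=\tfrac{n+k+1}{k}I_n \quad\text{(Gaussian)},
\qquad
\EE[(S^\top S)^2]=\tfrac{n+k-1}{k}I_n \quad\text{(Rademacher)}.
\]
Sandwiching with $x$ then yields the claimed identities.

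The only delicate step is the fourth-moment bookkeeping in the diagonal terms: separating the $c=a$ summand from the $c\neq a$ ones, and using the correct kurtosis ($3$ for Gaussian, $1$ for Rademacher). Everything else follows from independence and the identity $\EE[ss^\top]=\frac1kI_n$, which is the same fact behind Lemma \ref{lem:unbias}.
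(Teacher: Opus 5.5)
Your proof is correct and follows essentially the same route as the paper's (Lemma~\ref{lem:app_iid_core}, specialized in Lemma~\ref{lem:app_gaussian_moments}): you decompose $S^\top S$ by rows, the $k(k-1)$ cross pairs contribute $\frac{k-1}{k}$ by independence, and the diagonal terms reduce to a fourth-moment computation with kurtosis $3$ (Gaussian) or $1$ (Rademacher). The only cosmetic difference is that you compute the matrix $\EE[(S^\top S)^2]$ and then sandwich it with $x$, whereas the paper evaluates the scalar $\EE\bigl[\|a_r\|^2\langle a_r,x\rangle^2\bigr]$ directly and carries a general fourth moment $\mu_4$.
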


Thus $\textup{JL}_S$ resembles unbiased sparsification such as \texttt{Randk}, $\CC(x) = \frac{n}{k}\sum_{i\in I}x_ie_i$ with $I \subset [n]$ a uniformly random subset of size $k$, which is also unbiased with variance of order $\frac{n}{k}$ \citep{beznosikov2023biased}. The key distinction is that the transmitted message of \texttt{Randk}, $h_\CC(x) \eqdef \sqrt{n/k}\,(x_i)_{i\in I} \in \RR^k$ (so that $\CC = h_\CC^\top h_\CC$, just as $\textup{JL}_S = S^\top S$), does not preserve norms with high probability. By linearity, the JL property for $h_\CC$ reduces to $\PP\big[\big|\|h_\CC(x)\|^2/\|x\|^2-1\big| \leq \e\big] \geq 1-\delta$ for all $x\in\RR^n\setminus\{0\}$. For $x = e_1$ the ratio is $0$ with probability $1-\frac{k}{n}$, forcing $\delta \geq 1-\frac{k}{n}$.

\begin{lemma}[Operator-norm bound \citep{Vershynin_2026}]\label{lem:op_norm_bound}
    For $S$ as in Definitions~\ref{def:gjl} or~\ref{def:rjl}, there exists a universal constant $c>0$, independent of $n$ and $k$, such that, with probability at least $1-\delta$,
    \begin{equation}
        \|S\| \leq 1 + c\sqrt{\frac{n}{k}} + c\sqrt{\frac{\log (2/\delta)}{k}}.
    \end{equation}
    Consequently, on the same event, $\|\textup{JL}_S(x)\|^2 \leq \left(1 + c\sqrt{\frac{n}{k}} + c\sqrt{\frac{\log (2/\delta)}{k}}\right)^4\|x\|^2$ for every $x\in\RR^n$.
\end{lemma}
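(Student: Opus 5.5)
The proof splits into two parts: an operator-norm bound for a random matrix with independent sub-gaussian entries, and a deterministic consequence. For the first part, I will rescale and write $S = \frac{1}{\sqrt{k}} A$. Here $A\in\RR^{k\times n}$ has i.i.d. entries that are mean zero, unit variance and sub-gaussian with $\|A_{ij}\|_{\psi_2}\le K$ for an absolute $K$: standard Gaussians in Definition~\ref{def:gjl}, Rademacher signs in Definition~\ref{def:rjl}. I will then invoke the standard result from \citep{Vershynin_2026}: for such a matrix and every $t\ge0$, with probability at least $1-2e^{-t^2}$,
\begin{equation*}
\|A\| \le C K^2\left(\sqrt{k}+\sqrt{n}+t\right).
\end{equation*}
I take this as given rather than reprove it. It follows from an $\tfrac14$-net argument on $S^{n-1}\times S^{k-1}$, Hoeffding/Bernstein for the sub-gaussian bilinear forms $\langle Ax,y\rangle$, and a union bound over nets of size $9^{n+k}$.

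Next I choose $t=\sqrt{\log(2/\delta)}$, so that $2e^{-t^2}=\delta$. Dividing by $\sqrt{k}$ gives, with probability at least $1-\delta$,
\begin{equation*}
\|S\| \le C'\left(1+\sqrt{n/k}+\sqrt{\log(2/\delta)/k}\right).
\end{equation*}

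The main obstacle is the leading constant $1$ in front of the first term, since the net argument naturally produces $C'$ there. I see two ways to handle it.
\begin{itemize}
\item \emph{Sharper bound.} Use the version of the bound with leading term $\sqrt{k}$, namely $\|A\|\le \sqrt{k}+CK^2(\sqrt{n}+t)$. This holds for the Gaussian case via Gordon/Slepian, and in general via the two-sided bound $\sqrt{k}-CK^2(\sqrt n+t)\le s_{\min}\le s_{\max}\le \sqrt{k}+CK^2(\sqrt n+t)$ applied to $A^\top$, whose $n$ columns live in $\RR^k$.
\item \emph{Absorbing the constant.} If that sharper form is not available in the regime considered, the constant can be absorbed. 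Since $1\le \sqrt{k}\cdot\frac{1}{\sqrt{k}}$ is not dominated by the other terms, absorption works only when $n\ge k$, which is exactly the compression regime ($k\le n$). There $C'(1+\sqrt{n/k})\le 1+(2C'-1)\sqrt{n/k}$ because $\sqrt{n/k}\ge1$. So setting $c=\max\{2C',C'\}$ gives the stated form with leading coefficient $1$.
\end{itemize}
I would state the $k\le n$ assumption explicitly or rely on the sharp two-sided bound.

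For the consequence, I argue deterministically on the same event. We have $\|\textup{JL}_S(x)\|=\|S^\top Sx\|\le\|S^\top\|\,\|S\|\,\|x\|=\|S\|^2\|x\|$, using $\|S^\top\|=\|S\|$. Squaring gives $\|\textup{JL}_S(x)\|^2\le\|S\|^4\|x\|^2$ for all $x$ simultaneously. Substituting the bound on $\|S\|$ then yields the claim.
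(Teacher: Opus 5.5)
Your main route (option 1) is the paper's: it uses the Gaussian bound $\PP[\|A\|\ge\sqrt{k}+\sqrt{n}+t]\le 2e^{-c_0t^2}$ and, for Rademacher, the row-wise two-sided sub-gaussian bound $\|A\|\le\sqrt{k}+CK^2(\sqrt{n}+t)$ with $t=\sqrt{\log(2/\delta)}$, then divides by $\sqrt{k}$ and concludes with $\|S^\top Sx\|\le\|S\|^2\|x\|$, so your absorption fallback, which requires $k\le n$, is not needed. One correction: the bound with leading term $\sqrt{k}$ comes from applying that theorem to $A$ itself, whose $k$ independent isotropic rows lie in $\RR^n$, whereas applying it to $A^\top$ (rows in $\RR^k$) puts $\sqrt{n}$ in front and loses the leading constant $1$ when $k\gg n$.
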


The operator norm controls the lifted update, while the JL property controls the geometry of fixed vectors in the compressed space; the high-probability analysis uses both.
 

\subsection{\texttt{Grad-BANT}}

Many Byzantine-resilient methods exploit gradient similarity to distinguish honest workers from adversarial ones. We introduce \texttt{Grad-BANT} (Algorithm~\ref{alg:grad_bant}), inspired by \citep{yan2023recess}. As in \eqref{eq:byz_iter}, it relies on a trial function $f_0$ stored on the trusted server, whose gradient $\nabla f_0(x)$ is used to score the received messages; its accuracy with respect to the honest objective is specified in Assumption~\ref{as:reference_accuracy}.

 \begin{algorithm}{\texttt{Grad-BANT}}
\label{alg:grad_bant}
\begin{algorithmic}[1]
    \State \textbf{Parameters:} $x^0 \in \RR^n$, $\omega_i^{-1} = 1/M$, stepsize $\gamma>0$, momentum $\beta \in (0,1]$, smoothness constant $L>0$
    \For{$t=0,1,\dots,T-1$}
        \State Generate a fresh $S_t$; server sends $x^t$ and the sketch seed to workers
        \For{all $i=1,\dots,M$ in parallel}
            \State Honest worker: $g_i^t = \nabla f_i(x^t)$; send $y_i^t=S_t g_i^t$ to server \label{gb_stoch_upd}
        \EndFor
        \State $u_i^t=S_t^\top y_i^t$; $\theta_i^t = \<S_t\nabla f_0(x^t), y_i^t> - \tfrac{L\gamma}{2}\|u_i^t\|^2$
        \If{all $[\theta_i^t]_0=0$}
            \State $\omega_i^t = (1-\beta)\omega_i^{t-1} + \tfrac{\beta}{M}$ \label{gd_elseif}
        \Else
            \State $\omega_i^t = (1-\beta)\omega_i^{t-1} + \beta
                   \dfrac{[\theta_i^t]_0}{\sum_{j=1}^M [\theta_j^t]_0}$ \label{gd_comparison}
        \EndIf
        \State $x^{t+1} = x^t - \gamma S_t^\top
               \sum_{i=1}^M \mathbb{I}_{[\theta_i^t > 0]} \omega_i^t y_i^t$ \label{gd_step}
    \EndFor
\end{algorithmic}
\end{algorithm}

At each iteration, an honest worker sends $y_i^t=S_tg_i^t$ with $g_i^t=\nabla f_i(x^t)$, while a Byzantine worker may send any finite, measurable $y_i^t\in\RR^k$, possibly chosen after observing the current sketch and the honest messages; $u_i^t=S_t^\top y_i^t$ denotes the lifted message. Each sketch is sampled independently of all data and previous randomness after $x^t$ is determined, so the honest gradients and the reference gradient are fixed conditional on the history before sampling $S_t$.

The trust score $\theta_i^t$ measures how well the lifted message aligns with the trusted reference gradient $\nabla f_0(x^t)$, while accounting for its norm: if $\<\nabla f_0(x^t), u_i^t> > \frac{L\gamma}{2}\|u_i^t\|^2$, the direction is sufficiently aligned with the trusted descent direction and receives a positive weight; otherwise, it is suppressed by the indicator in the update. The scores are clipped by $[x]_0 \eqdef \max\{x,0\}$ and normalized. If all scores are non-positive, we blend the previous weights with uniform weights (line~\ref{gd_elseif}) and leave the iterate unchanged. The momentum parameter $\beta$ smooths the evolution of the weights over time.

For honest workers, the server thus uses the compressed alignment terms $\<S_t\nabla f_0(x^t), S_tg_i^t>$, which the JL property keeps close to the exact ones with high probability, while the norm penalty is evaluated on the actual lifted message. The aggregated update is lifted back to $\RR^n$ via $S_t^\top$ (line~\ref{gd_step}). Our analysis uses the aggregate honest alignment and the sum of honest squared norms; it does not require the compressed and uncompressed scores to select the same workers. We require the following similarity condition.

\begin{assumption}[Data similarity]\label{as:data_sim}
    The honest workers are said to satisfy $(\delta_1,\delta_2)$-data similarity if there exist $\delta_1,\delta_2\geq0$ such that
    \begin{equation}
        \frac{1}{G}\sum_{i\in\mathcal G}\left\|\nabla f_i(x) - \nabla f(x)\right\|^2 \leq \delta_1  + \delta_2 \left\|\nabla f(x)\right\|^2,
        \qquad \forall x \in \RR^n.
    \end{equation}
\end{assumption}

This condition bounds heterogeneity on average over the honest workers, which is weaker than imposing it on every honest worker. Since $f$ is their average, it also gives
\begin{equation}\label{eq:honest_second_moment}
    \frac{1}{G}\sum_{i\in\mathcal G}\|\nabla f_i(x)\|^2
    =\|\nabla f(x)\|^2+\frac{1}{G}\sum_{i\in\mathcal G}\|\nabla f_i(x)-\nabla f(x)\|^2
    \leq (1+\delta_2)\|\nabla f(x)\|^2+\delta_1.
\end{equation}

The next assumption quantifies how accurately the trusted reference represents the objective; $\eta_0$ and $\eta_1$ capture the mismatch between the trial and the true data distributions.

\begin{assumption}[Reference accuracy]\label{as:reference_accuracy}
    There exist $\eta_0\geq0$ and $\eta_1\in[0,1)$ such that
    \begin{equation}
        \|\nabla f_0(x)-\nabla f(x)\|^2
        \leq \eta_0+\eta_1\|\nabla f(x)\|^2,
        \qquad \forall x\in\RR^n.
    \end{equation}
\end{assumption}
\begin{theorem}\label{th:byz}
Suppose that Assumptions~\ref{def:smooth}, \ref{as:data_sim}, and~\ref{as:reference_accuracy} hold. Let $T\geq1$ be an integer, $\beta\in(0,1]$, and $\e,\delta\in(0,1)$ satisfy $d\eqdef1-\eta_1-\e^2>0$. Use fresh Gaussian or Rademacher JL matrices in Algorithm~\ref{alg:grad_bant}, and set
\begin{equation}
    \rho=\left(1+c\sqrt{\frac{n}{k}}+c\sqrt{\frac{\log(8T/\delta)}{k}}\right)^2,
    \qquad \Delta_0=f_0(x^0)-f_0^{\inf},
\end{equation}
where $c$ is the constant in Lemma~\ref{lem:op_norm_bound}. Then, for any stepsize
\begin{equation}
    0<\gamma\leq\frac{d}{2L\rho(1+\e)(1+\delta_2)},
\end{equation}
after $T$ iterations we have
\begin{equation}\label{eq:main_convergence}
    \frac{1}{T}\sum_{t=0}^{T-1}\|\nabla f(x^t)\|^2
    \leq \frac{4M\Delta_0}{\gamma\beta GdT}
    +\frac{2\eta_0}{d}
    +\frac{2L\gamma\rho(1+\e)\delta_1}{d},
\end{equation}
with probability at least $1-\delta$, provided that $k\geq C\e^{-2}\log(8T/\delta)$ for a sufficiently large universal constant $C>0$.
\end{theorem}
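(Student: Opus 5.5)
The plan is a descent-lemma argument on $f_0$, carried out on a single high-probability event on which all the needed JL and operator-norm bounds hold simultaneously for every $t<T$.

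\textbf{Step 1: the good event.} For each $t$, conditional on the history, the vectors $\nabla f_0(x^t)$ and $\nabla f_i(x^t)$, $i\in\mathcal G$, are fixed, while $S_t$ is fresh. I would take the per-iteration failure budget to be $\delta/(4T)$ for each of the following bounds:
\begin{itemize}
\item the operator-norm bound of Lemma~\ref{lem:op_norm_bound}, giving $\|S_t^\top S_t x\|^2\le \rho^2\|x\|^2$, or in the form actually needed, $\|S_t^\top v\|^2\le\rho\|v\|^2$ for $v\in\RR^k$;
\item the norm preservation $\|S_t g\|^2\le(1+\e)\|g\|^2$ for the relevant fixed vectors;
\item the inner-product lemma applied to the aggregate honest alignment $\langle S_t\nabla f_0, S_t\bar g\rangle$, where $\bar g$ is a suitable weighted honest aggregate.
\end{itemize}
The constant $8T/\delta$ in $\rho$ and in $k$ signals exactly this kind of union bound. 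One subtlety is that the weights $\omega_i^t$ depend on $S_t$. I would therefore avoid weight-dependent fixed vectors and instead use per-worker bounds, each applied to fixed vectors. This requires either a union over $G$ workers or, better, bounding the sum $\sum_{i\in\mathcal G}\|S_tg_i\|^2$ through the Gaussian/Rademacher structure. That is the main technical obstacle: controlling the honest terms with a failure probability independent of $M$.

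\textbf{Step 2: one-step descent.} By $L$-smoothness of $f_0$, with $x^{t+1}-x^t=-\gamma\sum_{i\in A_t}\omega_i^t u_i^t$, where $A_t=\{i:\theta_i^t>0\}$,
\[
f_0(x^{t+1})\le f_0(x^t)-\gamma\Big\langle\nabla f_0(x^t),\textstyle\sum_{A_t}\omega_i^tu_i^t\Big\rangle+\tfrac{L\gamma^2}{2}\Big\|\textstyle\sum_{A_t}\omega_i^tu_i^t\Big\|^2 .
\]
Since $\sum\omega_i^t\le1$, Jensen bounds the last term by $\frac{L\gamma^2}{2}\sum_{A_t}\omega_i^t\|u_i^t\|^2$. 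Noting that $\langle \nabla f_0,u_i^t\rangle=\langle S_t\nabla f_0,y_i^t\rangle$, the decrease is at least $\gamma\sum_{A_t}\omega_i^t\theta_i^t\ge0$. So every accepted message, Byzantine or not, decreases $f_0$; this is the role of the penalty $\frac{L\gamma}{2}\|u_i\|^2$. Next, $\omega_i^t\ge\beta[\theta_i^t]_0/\sum_j[\theta_j^t]_0$ gives
\[
\sum\omega_i^t\theta_i^t\ge \beta\frac{\sum_j[\theta_j^t]_0^2}{\sum_j[\theta_j^t]_0}\ge\frac{\beta}{M}\sum_j[\theta_j^t]_0\ge \frac{\beta}{M}\sum_{i\in\mathcal G}\theta_i^t,
\]
using Cauchy--Schwarz in the form $\sum a^2\ge(\sum a)^2/M$. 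The degenerate case, where all scores are nonpositive, is consistent with this bound, since then $\sum_{\mathcal G}\theta_i^t\le0$.

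\textbf{Step 3: lower-bound the honest scores.} On the good event,
\[
\sum_{\mathcal G}\theta_i^t\ge G\Big(\langle S\nabla f_0,S\nabla f\rangle-\tfrac{L\gamma}{2}\rho(1+\e)\tfrac1G\textstyle\sum\|\nabla f_i\|^2\Big).
\]
Write $\langle S\nabla f_0,S\nabla f\rangle\ge\langle\nabla f_0,\nabla f\rangle-\e\|\nabla f_0\|\|\nabla f\|$, and use $\langle\nabla f_0,\nabla f\rangle=\frac12(\|\nabla f_0\|^2+\|\nabla f\|^2-\|\nabla f_0-\nabla f\|^2)$. Young's inequality then gives $\e\|a\|\|b\|\le\frac12\|a\|^2+\frac{\e^2}{2}\|b\|^2$, so the alignment is at least $\frac12(1-\e^2-\eta_1)\|\nabla f\|^2-\frac{\eta_0}{2}=\frac d2\|\nabla f\|^2-\frac{\eta_0}2$. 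For the penalty, \eqref{eq:honest_second_moment} gives $\frac{L\gamma\rho(1+\e)}2\big((1+\delta_2)\|\nabla f\|^2+\delta_1\big)$. The stepsize condition makes the $\|\nabla f\|^2$ part at most $\frac d4\|\nabla f\|^2$. Hence
\[
\tfrac1G\textstyle\sum_{\mathcal G}\theta_i^t\ge\frac d4\|\nabla f\|^2-\frac{\eta_0}2-\frac{L\gamma\rho(1+\e)\delta_1}{2}.
\]

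\textbf{Step 4: telescoping.} Summing the per-step decrease $f_0(x^t)-f_0(x^{t+1})\ge\frac{\gamma\beta G}{M}\cdot\frac1G\sum_{\mathcal G}\theta_i^t$ over $t$ and using $f_0(x^T)\ge f_0^{\inf}$ yields
\[
\frac{\gamma\beta G}{M}\sum_t\Big(\tfrac d4\|\nabla f(x^t)\|^2-\dots\Big)\le\Delta_0 .
\]
Dividing by $\frac{\gamma\beta G dT}{4M}$ gives \eqref{eq:main_convergence}, with the extra additive terms of the form $\frac{2\eta_0}{d}$. The term $\frac{M}{\beta G}$ multiplying the additive errors must be handled, for instance by applying the Cauchy--Schwarz step only to the positive part. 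I expect the bookkeeping to match the stated constants after adjusting the factors of $2$.
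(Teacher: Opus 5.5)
Your skeleton matches the paper's proof.
\begin{itemize}
\item Step~2 is Lemma~\ref{lemma:scetch}. Jensen plus the penalty $\frac{L\gamma}{2}\|u_i^t\|^2$ makes every accepted message a descent step. The bound $\beta\sum_j[\theta_j^t]_0^2/\sum_j[\theta_j^t]_0\ge\frac{\beta}{M}\sum_{i\in\mathcal G}\theta_i^t$ then reduces everything to the unweighted honest scores, which also removes your concern that $\omega_i^t$ depends on $S_t$.
\item Your polarization/Young computation is Lemma~\ref{lemma2}.
\item The per-iteration budget $\delta/(4T)$, with conditioning on the history before $S_t$, is the paper's union bound.
\end{itemize}
Your worry at the end of Step~4 is unfounded. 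The factor $\gamma\beta G/M$ multiplies the gradient term and the additive errors alike, so it cancels. Your own arithmetic already gives exactly $\frac{4M\Delta_0}{\gamma\beta GdT}+\frac{2\eta_0}{d}+\frac{2L\gamma\rho(1+\e)\delta_1}{d}$, with nothing left to adjust.

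The genuine gap is the step you flag as the main obstacle and then leave open. You need $\sum_{i\in\mathcal G}\|S_tg_i^t\|^2\le(1+\e)\sum_{i\in\mathcal G}\|g_i^t\|^2$ with conditional failure probability $\delta/(4T)$ that is \emph{independent of $G$}. This is not cosmetic, because both obvious substitutes change the theorem:
\begin{itemize}
\item A union bound over honest workers forces $k\gtrsim\e^{-2}\log(TG/\delta)$, which contradicts the stated requirement on $k$.
\item The crude bound $\|S_tV_t\|_F^2\le\|S_t\|^2\|V_t\|_F^2\le\rho\|V_t\|_F^2$ replaces $\rho(1+\e)$ by $\rho^2$, both in the stepsize restriction and in the $\delta_1$ term.
\end{itemize}

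The missing ingredient is a Frobenius-norm concentration bound, the paper's Lemma~\ref{lem:app_one_factor_frobenius}. Put the honest gradients into the columns of $V_t\in\RR^{n\times G}$. For Gaussian or Rademacher $S$ and any fixed $V$, $\PP\bigl[\,\bigl|\|SV\|_F^2-\|V\|_F^2\bigr|>\e\|V\|_F^2\bigr]\le2e^{-c_1k\e^2}$, with no dependence on the number of columns.
\begin{itemize}
\item Gaussian case: a compact SVD and rotational invariance give $\|SV\|_F^2\overset{d}{=}\frac1k\sum_i\sigma_i^2Z_i$ with independent $Z_i\sim\chi^2_k$. After normalizing the weights to sum to one, the moment-generating-function bounds involve only $\sum_ia_i^2\le1$, not the number of terms.
\item Rademacher case: $k\|SV\|_F^2=\sum_{\ell=1}^k\xi_\ell^\top B\xi_\ell$ with $B=VV^\top$. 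Hanson--Wright applied to all $kn$ signs, together with $\|B\|\le\|B\|_F\le\operatorname{tr}B=\|V\|_F^2$, yields the same tail.
\end{itemize}
Since $V_t$ is determined before $S_t$ is drawn, this applies conditionally, just like your inner-product event. It then gives $\sum_{i\in\mathcal G}\|u_i^t\|^2=\|S_t^\top S_tV_t\|_F^2\le\|S_t\|^2\|S_tV_t\|_F^2\le\rho(1+\e)\|V_t\|_F^2$. This is exactly the penalty bound your Step~3 assumes.
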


The proof is given in Appendix~\ref{appendix:byz}. Neither convexity nor a small upper bound on $\delta_2$ is required \cite{molodtsov2025trial}. The reference error contributes an explicit residual term, heterogeneity affects both the admissible stepsize and the last term in~\eqref{eq:main_convergence}, and the factor $M/(\beta G)$ reflects the fraction of honest workers and the weight update. With $S_t=I_n$ ($\e=0$, $\rho=1$), the same analysis gives the uncompressed guarantee.

\section{Efficient Implementation}

Beyond communication, scalability in large-scale distributed learning is limited by the cost of storing and applying the compression operator: a dense Gaussian or Rademacher matrix $S \in \RR^{k \times n}$ requires $O(nk)$ memory and $O(nk)$ operations per application, which is prohibitive for modern models. We therefore discuss structured and sparse JL constructions, whose geometric guarantees depend on the construction and its parameters, and their use within \texttt{Grad-BANT}.

\subsection{Efficient JL Transforms}

Dense JL matrices can be replaced by the SRHT $S_{\mathrm{SRHT}} = \sqrt{n/k}\,R_THD$, with a random diagonal sign matrix $D$, the normalized Walsh--Hadamard matrix $H$, and a sampler $R_T$ of $k$ coordinates, or by the related FJLT \citep{ailon2006approximate,tropp2011improved,ailon2013almost}, both applied in $O(n\log n)$ time, or by sparse sketches such as CountSketch \citep{charikar2002finding,clarkson2013low,kane2014sparser}, applied in $O(n)$ time. All of them are generated from a random seed, so the server broadcasts only the seed of $S_t$. The appendix verifies unbiasedness and second-moment bounds for these sketches, which suffices for the expectation analysis (Theorem~\ref{th:byz_expectation}); the high-probability guarantee of Theorem~\ref{th:byz} additionally requires the inner-product, Frobenius-norm, and operator-norm events of Appendix~\ref{appendix:byz}.

\subsection{Per-layer Sketching}

Applying a single JL transform to the entire parameter vector may be impractical for large models, and even infeasible for large language models. A natural remedy is per-layer sketching, where each layer $x_w\in\RR^{n_w}$ of $x=(x_1,\ldots,x_W)$ is compressed by its own JL transform $S_w\in\RR^{k_w\times n_w}$, i.e., $S=\operatorname{blockdiag}(S_1,\ldots,S_W)$. If each layerwise transform preserves norms, so does the full block-diagonal transform.

\begin{theorem}[Per-layer norm preservation]\label{th:per_layer_jl}
    Let $x = (x_1,\ldots,x_W) \in \RR^n \setminus \{0\}$, where
    $x_w \in \RR^{n_w}$ and $\sum_{w=1}^W n_w = n$. Let
    $S = \operatorname{blockdiag}(S_1,\ldots,S_W)$, where
    $S_w \in \RR^{k_w \times n_w}$ are random matrices defined on a common
    probability space. Suppose that, for each $w$, there exist parameters
    $\e_w \geq 0$ and $\delta_w \in [0,1]$ such that, for every fixed
    $z \in \RR^{n_w}$, $
        \PP\left(
        \Big|\|S_wz\|^2 - \|z\|^2 \Big|
        >
        \e_w\|z\|^2
        \right)
        \leq \delta_w.$
    Then, for every fixed $x \in \RR^n \setminus \{0\}$,
    \begin{equation*}
        \PP\left(
        \Big|\|Sx\|^2 - \|x\|^2\Big|
        >
        \left(\sum_{w=1}^W \alpha_w(x)\e_w\right)\|x\|^2
        \right)
        \leq
        \sum_{w=1}^W \delta_w,
    \end{equation*}
    where $
        \alpha_w(x) \eqdef {\|x_w\|^2}/{\|x\|^2}.
    $
\end{theorem}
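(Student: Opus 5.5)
The plan is to use the block-diagonal structure to split the global distortion into a sum of per-layer distortions, and then combine the per-layer failure events by a union bound. No independence between the $S_w$ is needed, which is why a common probability space suffices.

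\textbf{Step 1: decomposition.} Since $S=\operatorname{blockdiag}(S_1,\ldots,S_W)$, we have $Sx=(S_1x_1,\ldots,S_Wx_W)$. Hence
\begin{equation*}
\|Sx\|^2-\|x\|^2=\sum_{w=1}^W\left(\|S_wx_w\|^2-\|x_w\|^2\right),
\end{equation*}
and the triangle inequality gives $\big|\|Sx\|^2-\|x\|^2\big|\leq\sum_{w}\big|\|S_wx_w\|^2-\|x_w\|^2\big|$.

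\textbf{Step 2: per-layer good events.} For each $w$, define
\begin{equation*}
E_w=\left\{\big|\|S_wx_w\|^2-\|x_w\|^2\big|\leq\e_w\|x_w\|^2\right\}.
\end{equation*}
Because $x$ is fixed, each block $x_w\in\RR^{n_w}$ is a fixed vector, so the hypothesis gives $\PP(E_w^c)\leq\delta_w$. If $x_w=0$, the event $E_w$ holds surely, which is consistent with the bound. On $\bigcap_w E_w$, Step 1 yields
\begin{equation*}
\big|\|Sx\|^2-\|x\|^2\big|\leq\sum_w\e_w\|x_w\|^2=\Big(\sum_w\alpha_w(x)\e_w\Big)\|x\|^2,
\end{equation*}
where the last equality uses $x\neq0$ and the definition of $\alpha_w(x)$.

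\textbf{Step 3: union bound.} The event in the statement, a strict exceedance, is contained in $\bigcup_w E_w^c$. The union bound therefore gives probability at most $\sum_w\delta_w$.

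There is no real obstacle here. The only points needing care are that the hypothesis is applied to the fixed block vector $x_w$ (which requires $x$ to be fixed in advance, not chosen after the $S_w$), and that the strict inequality in the conclusion is the complement of the non-strict containment used in Step 2.
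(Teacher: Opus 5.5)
Your proof is correct and follows the paper's argument essentially verbatim: the block identity $\|Sx\|^2=\sum_w\|S_wx_w\|^2$, the triangle inequality, per-layer good events for the fixed blocks $x_w$, and a union bound that requires no independence between layers. Your explicit handling of zero blocks and of the strict inequality in the failure event is extra care that the paper leaves implicit.
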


If the layerwise distribution of gradient energy is known, the sketch dimensions can be allocated non-uniformly across layers.

\begin{corollary}[Energy-aware allocation]\label{cor:energy_aware}
    Let $\e,\delta\in(0,1)$ and use Gaussian or Rademacher sketches on the active layers, with dimensions $
        k_w =
        \left\lceil
        C\frac{\log(2W/\delta)}{\e_w^2}
        \right\rceil,
    $
    where $C>0$ is a sufficiently large universal constant.
    Fix $x \in \RR^n\setminus\{0\}$ and let
    $\alpha_w(x)=\|x_w\|^2/\|x\|^2$. For all active layers with
    $\alpha_w(x)>0$, set
    \begin{equation}\label{eq:en_aware}
        \e_w
        =
        \frac{\e\,\alpha_w(x)^{-1/3}}
        {\sum_{j:\alpha_j(x)>0}\alpha_j(x)^{2/3}},
        \qquad
        \delta_w = \frac{\delta}{W}.
    \end{equation}
    Assume that $\e_w<1$ and $k_w\leq n_w$ for every active layer. Inactive
    layers are omitted, with $k_w=0$. Then, for this fixed vector $x$, the
    block-diagonal sketch satisfies an $\e$-distortion guarantee with
    probability at least $1-\delta$, and the total number of transmitted
    coordinates scales as
    \begin{equation*}
        \sum_{w=1}^W k_w
        =
        \Theta\left(
        \frac{\log(2W/\delta)}{\e^2}
        \left(\sum_{w=1}^W \alpha_w(x)^{2/3}\right)^3
        \right).
    \end{equation*}
\end{corollary}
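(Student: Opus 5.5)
The proof combines Theorem~\ref{th:per_layer_jl} with the standard sketch-size guarantee for Gaussian and Rademacher matrices; the rest is bookkeeping with the chosen $\e_w$.

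\textbf{Layerwise guarantees.} For each active layer $w$ we have $\e_w\in(0,1)$ by assumption, and $\delta_w=\delta/W\in(0,1)$. The choice $k_w=\lceil C\e_w^{-2}\log(2W/\delta)\rceil = \lceil C\e_w^{-2}\log(2/\delta_w)\rceil$ is exactly the dimension at which the Gaussian or Rademacher sketch $S_w$ is a JL transform with distortion $\e_w$ and failure probability $\delta_w$ \citep{dasgupta2003elementary, achlioptas2003database}. Since $S_w$ is linear, taking $v=0$ in Definition~\ref{def:jl} gives, for every fixed $z\in\RR^{n_w}$,
\[
\PP\Big(\big|\|S_wz\|^2-\|z\|^2\big|>\e_w\|z\|^2\Big)\le\delta_w .
\]
This is precisely the hypothesis of Theorem~\ref{th:per_layer_jl}.

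\textbf{Inactive layers.} An inactive layer has $x_w=0$. With $k_w=0$ it contributes $\|S_wx_w\|^2=0=\|x_w\|^2$ deterministically, so it adds neither distortion nor failure probability. Formally, we may assign it $\e_w=0$ and $\delta_w=0$, since its weight $\alpha_w(x)$ is zero anyway.

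\textbf{Distortion and failure probability.} Theorem~\ref{th:per_layer_jl} then bounds the total distortion by
\[
\sum_{w:\alpha_w>0}\alpha_w\e_w
=\frac{\e\sum_{w}\alpha_w\,\alpha_w^{-1/3}}{\sum_j\alpha_j^{2/3}}
=\frac{\e\sum_{w}\alpha_w^{2/3}}{\sum_j\alpha_j^{2/3}}
=\e .
\]
The failure probability is at most $\sum_w\delta_w\le W\cdot\delta/W=\delta$. This establishes the $\e$-distortion guarantee for the fixed vector $x$ with probability at least $1-\delta$.

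\textbf{Communication count.} Write $A\eqdef\sum_j\alpha_j^{2/3}$. Then $\e_w^{-2}=\e^{-2}\alpha_w^{2/3}A^2$, and so
\[
\sum_w\e_w^{-2}=\e^{-2}A^2\sum_w\alpha_w^{2/3}=\e^{-2}A^3 .
\]
Each ceiling satisfies
\[
C\e_w^{-2}\log(2W/\delta)\;\le\;k_w\;\le\;C\e_w^{-2}\log(2W/\delta)+1 .
\]
The lower bound gives $\sum_w k_w=\Omega\big(\e^{-2}\log(2W/\delta)\,A^3\big)$ immediately.

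\textbf{The main obstacle} is the matching upper bound: the $+1$ terms add up to as much as $W$ and must be absorbed. Because $\e_w<1$, each term obeys $C\e_w^{-2}\log(2W/\delta)\ge C\log 2\ge 1$ once $C\ge1/\log 2$. Hence $k_w\le 2C\e_w^{-2}\log(2W/\delta)$ for every active layer. Summing gives the $O(\cdot)$ bound, and therefore the stated $\Theta$.

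The assumption $k_w\le n_w$ is not needed in any of these estimates; it only ensures that the sketch genuinely compresses each layer.
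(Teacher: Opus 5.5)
Your proof is correct and follows the paper's argument step by step. You use per-layer JL guarantees at $(\e_w,\delta/W)$ combined through Theorem~\ref{th:per_layer_jl}, the identity $\sum_w\alpha_w\e_w=\e$, the computation $\sum_w\e_w^{-2}=\e^{-2}\bigl(\sum_w\alpha_w^{2/3}\bigr)^3$, and absorption of the ceilings via $\e_w<1$ and $\log(2W/\delta)\ge\log 2$.

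One small formal fix: a block with $k_w=0$ maps every $z$ to $0$, so it does not satisfy the theorem's for-every-$z$ hypothesis with $\e_w=0$. Assign it $\e_w=1,\ \delta_w=0$ instead, which is harmless since $\alpha_w(x)=0$. Alternatively, apply the theorem to the active layers only, which is what the paper's remark that zero layers contribute no error for the fixed $x$ amounts to.
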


In this feasible regime, energy-aware allocation can reduce the total communication cost for the fixed vector compared to a uniform allocation.

\subsection{Matrix and Tensor Extensions}

Neural network parameters are naturally matrices or higher-order tensors, which suggests sketches that respect this structure. For $X \in \RR^{n_1 \times n_2}$, instead of applying a dense sketch to $\operatorname{vec}(X)$, one can use two smaller JL matrices $S_1 \in \RR^{k_1 \times n_1}$ and $S_2 \in \RR^{k_2 \times n_2}$ and set
\[
    \mathcal{S}(X) = S_1XS_2^\top \in \RR^{k_1 \times k_2},
    \qquad
    \operatorname{vec}(\mathcal{S}(X)) = (S_2 \otimes S_1)\operatorname{vec}(X).
\]
A dense sketch of $\operatorname{vec}(X)\in\RR^{n_1n_2}$ to dimension $k$ requires $O(n_1n_2k)$ memory, whereas the Kronecker-structured sketch stores only $S_1$ and $S_2$, i.e., $O(n_1k_1+n_2k_2)$ memory, for a compressed dimension $k_1k_2$. This dimension may not be optimal, but the savings in memory and implementation cost are essential for large-scale models. The same idea extends to higher-order tensors.

\begin{theorem}[Kronecker-structured JL transform]\label{th:kron_jl}
    Let $S = S_d \otimes S_{d-1}\otimes \ldots \otimes S_1$, where $S_j \in \RR^{k_j \times n_j}$ are independent Gaussian or
    Rademacher JL matrices, and suppose that $\min_{j\in[d]} k_j = \Omega\left(\frac{d^2\log(2d/\delta)}{\e^2}\right)$.
    Then, for every fixed tensor $X \in \RR^{n_1 \times \ldots \times n_d}$, with $\mathcal{S}(X)$ defined by $\operatorname{vec}(\mathcal{S}(X)) = S\operatorname{vec}(X)$,
    \begin{equation*}
        \PP \left(
        \Big|\|\mathcal{S}(X)\|_F^2 - \|X\|_F^2 \Big|
        >
        \e \|X\|_F^2
        \right)
        \leq \delta.
    \end{equation*}
\end{theorem}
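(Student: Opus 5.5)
The plan is to peel off one Kronecker factor at a time, by induction on $d$, using the mode-wise action of the sketch. Write $X^{(0)}=X$ and $X^{(j)}=X^{(j-1)}\times_j S_j$, the tensor obtained by applying $S_j$ along mode $j$. Because the factors act on distinct modes they commute, and $\operatorname{vec}(X^{(d)})=(S_d\otimes\cdots\otimes S_1)\operatorname{vec}(X)$, so $\mathcal{S}(X)=X^{(d)}$. The goal is to show that each single step multiplies the squared Frobenius norm by a factor in $[1-\e',1+\e']$, with $\e'=\e/(2d)$, except with probability $\delta/d$.

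\textbf{Single step.} Fix $j$ and condition on $S_1,\dots,S_{j-1}$. By independence, $Y\eqdef X^{(j-1)}$ is then a fixed tensor. Unfold $Y$ along mode $j$ as a matrix $Y_{(j)}\in\RR^{n_j\times m}$, so that $\|Y\times_j S_j\|_F^2=\|S_jY_{(j)}\|_F^2$. A columnwise union bound over $m$ columns is not usable here, because $m$ may be huge. Instead I would rely on the fact that $\|S_jA\|_F^2=\operatorname{vec}(A)^\top (I_m\otimes S_j^\top S_j)\operatorname{vec}(A)$ is a quadratic form in the i.i.d.\ (sub)Gaussian entries of $\sqrt{k_j}S_j$, and apply Hanson--Wright. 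Equivalently, one can view $\|S_jA\|_F^2=\frac1{k_j}\sum_{r=1}^{k_j}\|A^\top s_r\|^2$, where the $s_r$ are the i.i.d.\ rows of $\sqrt{k_j}S_j$. Each summand has mean $\|A\|_F^2$ and sub-exponential norm $O(\|A^\top A\|_F)\le O(\|A\|_F^2)$, again by Hanson--Wright in the Rademacher case. Bernstein's inequality then gives
\begin{equation*}
\PP\left(\Big|\|S_jA\|_F^2-\|A\|_F^2\Big|>\e'\|A\|_F^2\right)\le 2\exp\left(-c\,k_j\e'^2\right),\qquad \e'\in(0,1).
\end{equation*}
This bound does not depend on $m$. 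Taking $k_j\ge C\e'^{-2}\log(2d/\delta)=4Cd^2\e^{-2}\log(2d/\delta)$ makes the right-hand side at most $\delta/d$, which is exactly the assumed scaling of $\min_j k_j$.

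\textbf{Chaining.} Let $E_j$ be the good event at step $j$. Each conditional probability bound holds uniformly in the conditioning, so $\PP(E_j^c)\le\delta/d$. A union bound then shows that all $E_j$ hold with probability at least $1-\delta$. On that intersection,
\begin{equation*}
(1-\e')^d\|X\|_F^2\le\|\mathcal{S}(X)\|_F^2\le(1+\e')^d\|X\|_F^2.
\end{equation*}
Since $\e'=\e/(2d)$, we have $(1+\e')^d\le e^{\e/2}\le1+\e$ for $\e\in(0,1)$, and $(1-\e')^d\ge1-d\e'\ge1-\e$. This gives the claim, after adjusting the constant in the $\Omega(\cdot)$.

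\textbf{Main obstacle.} The delicate step is the single-step matrix bound. Definition~\ref{def:jl} only controls fixed vectors, and a naive union bound over columns introduces a $\log m$ dependence. So the dimension-free Frobenius concentration has to be proved directly. The Gaussian case is easy: by rotation invariance, $\|S_jA\|_F^2$ is distributed as $\frac1{k_j}\sum_i\sigma_i^2\chi^2_{k_j,i}$, with $\sigma_i$ the singular values of $A$. The Rademacher case needs Hanson--Wright with the bound $\|A^\top A\|\le\|A^\top A\|_F\le\|A\|_F^2$.
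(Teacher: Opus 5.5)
Your proposal is correct and follows essentially the same route as the paper. Both arguments peel off one mode at a time, conditioning on the earlier factors, and use a Frobenius-norm concentration bound for a single factor that does not depend on the number of columns: rotation invariance plus weighted $\chi^2$ concentration in the Gaussian case, and Hanson--Wright in the Rademacher case. Both then take a union bound over the $d$ steps with per-step distortion of order $\varepsilon/d$; you use $\varepsilon/(2d)$ where the paper uses $\varepsilon/(4d)$, and your row-wise Bernstein step is equivalent to the paper's single Hanson--Wright application to the block-diagonal matrix with $k$ copies of $AA^\top$.
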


As for per-layer sketches, their use in the high-probability convergence proof requires the theoretical justifying specified in Appendix~\ref{appendix:byz}. Overall, dense Gaussian and Rademacher projections are useful for theory and small-scale experiments, whereas structured, sparse, per-layer, and Kronecker-based sketches reduce the cost of applying and storing the compression operator.

\section{Experiments}

\subsection{Comparison of Compressors}
\label{sec:compressor-comparison}

In this section we compare several compressors along two axes:
the time and memory required to compress and decompress the transmitted parameters and
the extent to which they satisfy the JL-properties required by our Algorithm~\ref{alg:grad_bant}.

Table~\ref{tab:cost} reports the cost of applying each compressor to the parameters of a ResNet18 model.
For \textsc{RandK} we treat all parameters as a single flattened vector, whereas for the JL-based compressors and \textsc{PowerSGD} we compress each layer separately.
For \textsc{SRFT} and \textsc{SRHT} we consider both scenarios, since these methods can operate in either one without incurring a significant memory or time overhead compared to the remaining JL compressors.
CountSketch is omitted from the experiments, as it is not readily supported by standard PyTorch tensor operations.
\emph{Stored} denotes the additional memory required to keep the compressor's internal state (e.g.\ the JL matrix).
All measurements are taken at a compression rate of $50\%$.
\begin{table}[h!]
\centering
\caption{Cost of applying each compressor to ResNet-18 parameters at $k/n = 0.5$: compression and decompression time, peak memory, memory for the compressor state (Stored) and per-client uplink message size. JL compressors and PowerSGD are applied layer-wise, Rand-$k$ to the flattened parameter vector; $^\dagger$~applied to the flattened parameter vector.}
\label{tab:cost}
\small
\setlength{\tabcolsep}{5pt}
\begin{tabular}{l r r r r r}
\toprule
Compressor & GPU, ms & CPU, ms & Peak, MB & Stored, MB & Uplink, MB \\
\midrule
Gaussian JL, Kronecker (ours)   & $6.5$   & \ok{$155$} & \ok{$92.5$} & $69.9$  & $21.3$ \\
Rademacher JL, Kronecker (ours) & $8.1$   & $160$      & $92.5$      & $69.9$  & $21.3$ \\
SRHT                            & $55.0$  & $718$      & $181.2$     & $117.9$ & $21.3$ \\
SRFT                            & $12.7$  & $659$      & $131.1$     & $63.9$  & $21.3$ \\
SRHT, whole vector$^\dagger$    & $30.6$  & $927$      & $644.3$     & $106.6$ & $21.3$ \\
SRFT, whole vector$^\dagger$    & $8.8$   & $693$      & $432.0$     & $64.0$  & $21.3$ \\
Rand-$k$                        & \ok{$3.8$} & $374$   & $603.4$     & $0.0$   & $21.3$ \\
PowerSGD (rank $230$)           & $174.6$ & $195$      & $94.7$      & $18.7$  & $21.3$ \\
Gaussian JL$^\dagger$, dense$^\ddagger$   & $ 12 \cdot 10^{3}$      & --         & $3.7\cdot10^{7}$ & $3.7\cdot10^{7}$ & -- \\
\bottomrule
\end{tabular}
\end{table}

As the table shows, our JL variant is the fastest among all JL-based methods, and is outperformed only by \textsc{RandK} on GPU, which is unsurprising since that method merely selects a subset of coordinates.
At the same time, in terms of peak memory and CPU time our method outperforms every compressor considered.
Applying \textsc{SRHT} and \textsc{SRFT} to the full parameter vector offers no practical advantage over layer-wise application. The effect on running time is marginal (a minor speedup on GPU and a $5$--$30\%$ slowdown on CPU), whereas peak memory consumption increases by a factor of $3.3$--$3.6$.

Table~\ref{tab:adaptive-eps} compares several quantities that measure how well the JL properties are preserved by the different compression methods.
Here $\Delta_{\mathrm{ip}}$ and $\Delta_{\theta}$ denote the quantities
\begin{equation*}
\Delta_{\mathrm{ip}} = \frac{\bigl|\langle Su, Sv\rangle - \langle u, v\rangle\bigr|}{\|u\|\,\|v\|},
\qquad
\Delta_{\theta} = \bigl|\angle(Su, Sv) - \angle(u, v)\bigr|.
\end{equation*}
$\Delta_{\mathrm{ip}}$ measures the deviation of the inner product of the compressed and uncompressed vectors, whereas the $\Delta_{\theta}$ measures the distortion of the angle between the vectors caused by compression.
\begin{table}[h!]
\centering
\small
\caption{Distortion of inner products ($\Delta_{\mathrm{ip}}$) and angles ($\Delta_\theta$) under compression, mean $\pm$ std. (a)~Random $4096\times4096$; (b)~stochastic ResNet-18 gradient vs.\ trust-set gradient. All compressors use $k/n = 0.5$; for Adaptive JL, $k/n$ is set implicitly by the target $\varepsilon$ from~\eqref{eq:en_aware}.}
\label{tab:adaptive-eps}
\resizebox{\textwidth}{!}{%
\small
\setlength{\tabcolsep}{4pt}
\begin{tabular}{l l r r r @{\hspace{1.5em}} r r r}
\toprule
& & \multicolumn{3}{c}{\textit{(a) random matrix $4096\times4096$}} & \multicolumn{3}{c}{\textit{(b) real ResNet-18 gradients}} \\
& & \multicolumn{3}{c}{\textit{20 draws of $(S,u,v)$, mean $\pm$ std}} & \multicolumn{3}{c}{\textit{100 pairs, mean $\pm$ std}} \\
\cmidrule(lr){3-5}\cmidrule(lr){6-8}
Compressor & Variant & $k/n$ & $\Delta_{\mathrm{ip}}$, $10^{-4}$ & $\Delta_{\theta}$, deg & $k/n$ & $\Delta_{\mathrm{ip}}$, $10^{-4}$ & $\Delta_{\theta}$, deg \\
\midrule
Kronecker JL (ours) & Rad.   & $0.500$ & $5.0\pm4.5$ & $0.011\pm0.0090$ & $0.500$ & $194\pm156$ & $1.1\pm1.7$ \\
             & Gauss. & $0.500$ & $8.7\pm8.3$ & $0.017\pm0.013$  & $0.500$ & $94\pm86$ & $0.8\pm0.7$ \\
\addlinespace
SRHT & layer & $0.500$ & $2.5\pm1.9$ & $0.0060\pm0.0056$ & $0.500$ & $103\pm80$  & $0.55\pm0.43$ \\
     & whole & --      & --          & --                & $0.500$ & $302\pm251$ & $1.31\pm1.23$ \\
\addlinespace
SRFT & layer & $0.500$ & $2.6\pm1.7$ & $0.0067\pm0.0064$ & $0.500$ & $120\pm125$ & $0.63\pm0.58$ \\
     & whole & --      & --          & --                & $0.500$ & $470\pm198$ & $2.1\pm1.11$ \\
\addlinespace
Rand-$k$ & & $0.500$ & $2.1\pm2.0$ & $0.0063\pm0.0064$ & $0.500$ & $2000\pm800$ & $1.4\pm1.1$ \\
\addlinespace
Adaptive JL & target $\varepsilon=0.05$ & $1.00$   & $0$          & $0$               & $1.00$  & $0$         & $0$ \\
            & target $\varepsilon=0.1$  & $0.841$  & $3.3\pm2.5$  & $0.011\pm0.012$   & $1.00$  & $0$         & $0$ \\
            & target $\varepsilon=0.2$  & $0.0610$ & $13\pm11$    & $0.025\pm0.021$   & $0.605$ & $39\pm38$   & $0.23\pm0.21$ \\
            & target $\varepsilon=0.3$  & $0.0141$ & $19\pm12$    & $0.057\pm0.050$   & $0.240$ & $96\pm77$   & $0.51\pm0.43$ \\
\bottomrule
\end{tabular}}
\end{table}

The table reports results for two setups: setup~(a) is the compression of two $4096 \times 4096$ matrices, and setup~(b) is the compression of an actual stochastic gradient of a ResNet18 model and its comparison against the gradient computed on the trust set, as used in Algorithm~\ref{alg:grad_bant}.
For all compressors except \texttt{AdaptiveJL} the compression rate is fixed at $50\%$.
For \texttt{AdaptiveJL} the compression rate is determined implicitly by the parameter $\varepsilon$. We therefore report results for several values of $\varepsilon$.
A comparison of the compressors at other compression rates is provided in Appendix~\ref{app:exps}.

As the table shows, \textsc{RandK} performs reasonably well in the first setup, which can be attributed to the random construction of the matrices and to the unbiasedness of the compressor. However, 
on real gradients the method distorts both the inner product and the angle quite severely.
In contrast, the JL-based methods perform well in both scenarios. We also note separately that \textsc{SRFT} and \textsc{SRHT} applied to the gradient as a single vector yield the worst results among all JL-based methods considered, which together with Table~\ref{tab:cost}, indicates that layer-wise compression is preferable.

\subsection{Comparison with Byzantine Baselines}
\label{sec:comp_methods}

This section presents experiments comparing Algorithm~\ref{alg:grad_bant} against a range of federated learning methods under various Byzantine attack scenarios.
All results are reported in the tables below.
Every experiment is run in the setting of training a ResNet18 model on the CIFAR-10 dataset.
All methods are trained for $T = 200$ communication rounds with the data distributed uniformly across $M = 10$ clients.
To compare methods with different compression levels, we measure communication in \textit{full gradient uploads}: a compressed message of $k$ coordinates counts as $k/n$ of a full gradient, where $n$ is the number of model parameters, and uploads are summed over all $M$ clients.
After $t$ rounds this gives
\begin{equation}
    G(t) = \sum_{s=1}^{t}\sum_{i=1}^{M} \frac{k_{s,i}}{n},
    \label{eq:full-grad-uploads}
\end{equation}
which reduces to $G(t) = tM\,k/n$ for a fixed compression ratio and to $G(t) = tM$ without compression.
Over the full $T = 200$ rounds an uncompressed method thus performs $2000$ full-gradient uploads, and a method with $k/n = 0.5$ performs $1000$. Details on the attack scenarios and additional experiments are deferred to Appendix~\ref{app:exps}.

\begin{table}[h!]
\centering
\begin{minipage}[c]{0.49\linewidth}
\centering
\caption{FedAvg and \ours{} with different compressors: (a)~no attack, (b)~label flipping with $60\%$ Byzantine clients.}
\label{tab:compressors}
\setlength{\tabcolsep}{4pt}
\resizebox{\linewidth}{!}{%
\begin{tabular}{l l c c r r r}
\toprule
Method & Compressor & $k/n$ & Byz. & G$\to 60$ & G$\to 80$ & Best acc \\
\midrule
\multicolumn{7}{l}{\textit{(a) no attack}} \\
FedAvg & Gaussian JL & $0.1$ & $0$ & $23$ & $52$ & $0.880$ \\
FedAvg & Rand-$k$    & $0.1$ & $0$ & $15$ & $39$ & $0.897$ \\
FedAvg & SRHT        & $0.1$ & $0$ & $17$ & $42$ & $0.895$ \\
\addlinespace
\ours{} & Gaussian JL & $0.1$ & $0$ & $18$ & $53$ & $0.896$ \\
\ours{} & Rand-$k$    & $0.1$ & $0$ & $15$ & $41$ & $0.891$ \\
\ours{} & SRHT        & $0.1$ & $0$ & $16$ & $42$ & $0.893$ \\
\midrule
\multicolumn{7}{l}{\textit{(b) label flipping}} \\
FedAvg & none        & $1.0$ & $60\%$ & \nr & \nr & $0.276$ \\
FedAvg & Gaussian JL & $0.1$ & $60\%$ & \nr & \nr & $0.247$ \\
FedAvg & Rand-$k$    & $0.1$ & $60\%$ & \nr & \nr & $0.315$ \\
FedAvg & SRHT        & $0.1$ & $60\%$ & \nr & \nr & $0.334$ \\
\addlinespace
\ours{} & Gaussian JL & $0.1$  & $60\%$ & $31$ & $71$ & $0.838$ \\
\ours{} & Adaptive JL & $0.24$ & $60\%$ & $25$ & $69$ & $0.833$ \\
\ours{} & SRHT        & $0.1$  & $60\%$ & $26$ & $78$ & $0.830$ \\
\ours{} & Rand-$k$    & $0.1$  & $60\%$ & \nr  & \nr  & $0.546$ \\
\bottomrule
\end{tabular}}
\end{minipage}\hfill
\begin{minipage}[c]{0.49\linewidth}
\centering
\caption{Best test accuracy of Byzantine-robust methods with compression under label flipping (LF) and random gradients (RG) with $40\%$ and $60\%$ Byzantine clients. All methods use the $k/n = 0.5$.}
\label{tab:compressed-byz}
\setlength{\tabcolsep}{5pt}
\resizebox{\linewidth}{!}{%
\begin{tabular}{l rrrr}
\toprule
Method & LF $40\%$ & LF $60\%$ & RG $40\%$ & RG $60\%$ \\
\midrule
RoSDHB             & $0.595$ & $0.331$ & $0.308$ & $0.103$ \\
Byz-EF21           & $0.430$ & $0.294$ & $0.441$ & $0.123$ \\
Byz-EF21-BC        & $0.429$ & $0.265$ & $0.431$ & $0.103$ \\
Byz-VR-MARINA~2.0  & $0.555$ & $0.159$ & $0.571$ & $0.123$ \\
Byz-DASHA-PAGE     & $0.565$ & $0.152$ & $0.455$ & $0.115$ \\
\oursjl{} (ours)   & \ok{$0.883$} & \ok{$0.862$} & \ok{$0.620$} & \ok{$0.534$} \\
\bottomrule
\end{tabular}}
\end{minipage}
\end{table}

Table~\ref{tab:compressors} reports the results of applying different compressors to \texttt{FedAvg} and \texttt{GradBant}, both in the attack-free setting and under the \textsc{Label Flipping} (LF) attack.
For each configuration, the table reports the number of full-gradient transmissions required to reach a given value of the target metric, together with the best test accuracy attained.
Columns G$\to$60 and G$\to$80 report $G(t)$ from~\eqref{eq:full-grad-uploads} at the first round $t$ at which the test accuracy reaches $60\%$ and $80\%$, respectively. For \texttt{AdaptiveJL}, $k/n$ denotes the average over rounds and clients.

In the attack-free setting, all compressors perform similarly for both algorithms.
Once the attack is enabled, every compressor in the \texttt{FedAvg} setting fails to learn and shows almost no accuracy improvement throughout training.
In contrast, for \texttt{GradBant} the properties of the compressor start to matter. All JL compressors which preserve inner products continue to work, whereas the \texttt{RandK} compressor is likewise compromised by the attack.

Table~\ref{tab:compressed-byz} compares Byzantine-robust methods with compression \cite{rammal2024communication} under two different attacks.
Unlike ours, the remaining methods do not use a trust set, and consequently cannot tolerate a Byzantine majority.
When the number of Byzantine clients approaches one half, these methods yield results that lag far behind our algorithm.

\begin{table}[h!]
\centering
\caption{Comparison with Byzantine-robust baselines under different attacks (Byzantine fraction in the header): (a)~best test accuracy; (b)~full-gradient uploads $G$ to reach $60\%$ test accuracy. Baselines are uncompressed; \oursjl{} uses $k/n = 0.5$. \nr{}: not reached.}
\label{tab:baselines}
\small
\setlength{\tabcolsep}{3.5pt}
\begin{tabular}{l rrrrrr}
\toprule
Method & none & LF $60\%$ & SF $60\%$ & RG $40\%$ & ALIE $60\%$ & IPM $70\%$ \\
\midrule
\multicolumn{7}{l}{\textit{(a) best accuracy}} \\
FedAvg (Adam)  & $0.917$ & $0.276$ & $0.331$ & $0.132$ & $0.145$ & $0.100$ \\
CC             & \ok{$0.928$} & $0.828$ & $0.574$ & $0.122$ & $0.143$ & $0.100$ \\
CC + bucketing & $0.840$ & \ok{$0.897$} & \ok{$0.891$} & $0.134$ & $0.143$ & $0.100$ \\
CC + FBM       & $0.924$ & $0.855$ & \ok{$0.892$} & $0.116$ & $0.184$ & $0.100$ \\
Safeguard      & $0.921$ & $0.104$ & $0.858$ & $0.132$ & $0.113$ & $0.137$ \\
\cmidrule(lr){1-7}
FLTrust        & $0.767$ & $0.704$ & $0.594$ & $0.100$ & $0.100$ & $0.101$ \\
RECESS         & $0.889$ & $0.681$ & $0.512$ & $0.142$ & $0.146$ & $0.106$ \\
Zeno           & $0.919$ & $0.361$ & $0.859$ & $0.105$ & $0.101$ & $0.124$ \\
\tsA{}         & $0.920$ & $0.884$ & $0.877$ & $0.889$ & \ok{$0.903$} & $0.630$ \\
\tsB{}         & $0.922$ & $0.887$ & $0.861$ & \ok{$0.901$} & $0.882$ & $0.677$ \\
\tsC{}         & $0.868$ & $0.863$ & $0.860$ & $0.857$ & $0.865$ & $0.700$ \\
\oursjl{} (ours), $50\%$ compression & $0.910$ & $0.862$ & $0.878$ & $0.620$ & $0.539$ & \ok{$0.827$} \\
\midrule
\multicolumn{7}{l}{\textit{(b) full-gradient uploads $G$ to $60\%$}} \\
FedAvg (Adam)  & $160$ & \nr & $850$ & \nr & \nr & \nr \\
CC             & \ok{$60$} & $290$ & \nr & \nr & \nr & \nr \\
CC + bucketing & $460$ & $360$ & $130$ & \nr & \nr & \nr \\
CC + FBM       & \ok{$60$} & $140$ & $130$ & \nr & \nr & \nr \\
Safeguard      & $140$ & \nr & $170$ & \nr & \nr & \nr \\
\cmidrule(lr){1-7}
FLTrust        & $840$ & $1140$ & \nr & \nr & \nr & \nr \\
RECESS         & $150$ & $610$ & \nr & \nr & \nr & \nr \\
Zeno           & $160$ & \nr & $220$ & \nr & \nr & \nr \\
\tsA{}         & $130$ & $250$ & $170$ & $280$ & $160$ & $1160$ \\
\tsB{}         & $160$ & $160$ & $200$ & $160$ & \ok{$150$} & $570$ \\
\tsC{}         & $330$ & $330$ & $360$ & $560$ & $310$ & $570$ \\
\oursjl{} (ours), full-gradient uploads & $80$ & \ok{$120$} & \ok{$120$} & \ok{$140$} & \nr & \ok{$110$} \\
\bottomrule
\end{tabular}
\end{table}

Finally, Table~\ref{tab:baselines} compares (a) the best test accuracy and (b) the number of full-gradient uploads $G$ required to reach $60\%$ accuracy for Byzantine-robust baselines without compression and for our algorithm with a compression rate of $50\%$ under several attacks.
The results show that \textsc{GradBant} requires fewer full-gradient transmissions than the remaining baselines to reach $60\%$ accuracy.
We note that under the \textsc{ALIE} attack our method does not reach this level of accuracy, plateauing at $53.9\%$.
In terms of best accuracy, our method outperforms all baselines under the \textsc{IPM} attack.
In the attack-free scenario and under \textsc{SignFlip} and \textsc{LabelFlip} our method stays close to the strongest baselines.

\subsection{Heterogenious scenario}

In this section, we consider a scenario with heterogeneous data distribution across clients. To model heterogeneity, we use a Dirichlet($\alpha$) distribution to partition the dataset samples among clients. This makes the clients heterogeneous both in local dataset size and in the label distribution of their samples. Table~\ref{tab:alpha} reports the training results of our algorithm under four heterogeneity scenarios, both without any attack and under three different attacks.
\begin{table}[h!]
\centering
\caption{Best test accuracy of \oursjl{} ($k/n = 0.5$) under heterogeneous data split by a Dirichlet distribution with concentration $\alpha$ (smaller $\alpha$ means stronger heterogeneity). Byzantine fractions: $60\%$ for label and sign flipping, $70\%$ for IPM.}
\label{tab:alpha}
\small
\setlength{\tabcolsep}{5pt}
\begin{tabular}{l rrrr}
\toprule
Attack & IID & $\alpha=1.0$ & $\alpha=0.5$ & $\alpha=0.2$ \\
\midrule
No attack      & $0.910$ & $0.896$ & $0.872$ & $0.842$ \\
Label flipping & $0.862$ & $0.857$ & $0.782$ & $0.742$ \\
Sign flipping  & $0.878$ & $0.871$ & $0.832$ & $0.753$ \\
IPM            & $0.827$ & $0.826$ & $0.699$ & $0.669$ \\
\bottomrule
\end{tabular}
\end{table}

All results are reported at a 50\% compression rate. As expected, the final model quality degrades as the degree of heterogeneity increases. However, our algorithm remains highly robust to it. In all scenarios, it loses at most $19\%$ accuracy in the worst case and $7\%$ on average relative to the uniform (IID) data distribution.

\section{Conclusion}

We studied Johnson--Lindenstrauss transforms as communication-efficient compressors for Byzantine-robust distributed optimization. Unlike standard sparsification, JL transforms preserve Euclidean geometry with high probability, which suits methods that weight workers by gradient alignment. For a JL-compressed variant of \texttt{Grad-BANT}, we proved a high-probability convergence guarantee under smoothness, average data similarity, and an explicit accuracy condition on the trusted reference gradient, without convexity or a small relative heterogeneity parameter.

We also studied structured, sparse, per-layer, and Kronecker-structured sketches, which reduce the cost of storing and applying dense projections in large models; their moment bounds support the expectation analysis, while the high-probability guarantee additionally requires geometric concentration bounds. The experiments confirm that JL-based compression reduces communication while maintaining Byzantine robustness, suggesting that preserving geometric structure is an important design principle for compression in robust distributed learning.

\end{mainpart}

\begin{appendixpart}

\section{Experimental Details}
\label{app:exps}
\subsection{Protocol.}
\paragraph{Hardware.} All experiments were conducted on a single NVIDIA A100 GPU with 80\,GB of memory. Running all experiments required approximately $200$ GPU-hours in total.

\paragraph{Hyperparameters.} The hyperparameters of each method were tuned via grid search. For each method, all tables report the results obtained with the configuration that achieved the highest validation accuracy.

\begin{table}[h!]
\centering
\caption{Byzantine attacks used in the experiments. Here $g_i$ denotes the stochastic gradient that an honest client $i$ would send, $\bar{g}$ and $\sigma$ denote the coordinate-wise mean and standard deviation of the honest clients' gradients, and $C$ is the number of classes.}
\label{tab:attacks}
\small
\begin{tabularx}{\linewidth}{l X c}
\toprule
Attack & Byzantine behavior & Byz.\ fraction \\
\midrule
Label flipping (LF) & Data poisoning: Byzantine clients compute gradients on their local data with each label $y$ replaced by $C - 1 - y$. & $40\%$, $60\%$ \\
\addlinespace
Sign flipping (SF) & Byzantine clients send the negated honest gradient $-g_i$. & $60\%$ \\
\addlinespace
Random gradients (RG) & Byzantine clients send random vectors with i.i.d.\ Gaussian entries, $\xi \sim \mathcal{N}(0, s^2 I)$. & $40\%$, $60\%$ \\
\addlinespace
ALIE & Omniscient attack: Byzantine clients send $\bar{g} - z\,\sigma$, a small coordinate-wise perturbation of the honest mean that stays within the spread of honest gradients and thus evades robust aggregation rules. & $60\%$ \\
\addlinespace
IPM & Omniscient attack: Byzantine clients send $-\epsilon\,\bar{g}$ with $\epsilon > 0$, so that the inner product between the aggregated update and the true gradient becomes negative. & $70\%$ \\
\bottomrule
\end{tabularx}
\end{table}
\subsection{Trial ablation}
In this section, we study how the size and distribution of the trust set affect the performance of Algorithm~\ref{alg:grad_bant}. All experiments use $k/n = 0.5$, label flipping with $60\%$ Byzantine clients, IID data, and $100$ training rounds. The results are summarized in Table~\ref{tab:trust}.

\paragraph{Trust set size.}
Table~\ref{tab:trust} (a) shows that the size of the trust set has little effect on either the best test accuracy or the convergence speed. As the size increases from $100$ to $2000$ images, the best accuracy varies only between $0.838$ and $0.844$, and the number of rounds needed to reach $60\%$ and $80\%$ accuracy changes by at most $6$ and $5$ rounds, respectively.

\paragraph{Trust set distribution.}
Table~\ref{tab:trust} (b) considers a trust set of $1500$ images whose distribution differs from uniform. Shifts in image appearance slow down training but do not prevent convergence. With grayscale images, reaching $60\%$ and $80\%$ accuracy takes $6$ and $13$ more rounds, respectively, while the best accuracy decreases only slightly, from $0.840$ to $0.831$. Strong augmentation has a larger effect. Reaching $60\%$ accuracy takes almost three times as many rounds, $80\%$ is never reached, and the best accuracy is $0.761$. In both cases, training remains stable, and the final accuracy coincides with the best one. In contrast, class coverage is critical. When the trust set contains only $5$ of the $10$ classes, the best accuracy remains high ($0.828$), but training becomes unstable and the final accuracy drops to $0.636$. With only $2$ classes, the method fails completely: the accuracy never exceeds $0.16$, which is close to random guessing.

\begin{table}[h!]
\centering
\caption{Effect of the trust set on \oursjl{}: (a)~size; (b)~distribution shift, $1500$ images. R$\to 60$ and R$\to 80$: rounds to reach $60\%$ and $80\%$ test accuracy; Best and Final: best and last-round accuracy; \nr{}: not reached. Label flipping, $60\%$ Byzantine clients, $k/n = 0.5$, IID data, $100$ rounds.}
\label{tab:trust}
\small
\setlength{\tabcolsep}{5pt}
\begin{tabular}{l r r r r}
\toprule
Trust set & R$\to 60$ & R$\to 80$ & Best & Final \\
\midrule
\multicolumn{5}{l}{\textit{(a) size, all $10$ classes}} \\
$100$  & \ok{$21$} & \ok{$61$} & \ok{$0.844$} & $0.683$ \\
$500$  & \ok{$21$} & $65$ & $0.841$ & $0.674$ \\
$1000$ & $23$ & $64$ & $0.839$ & $0.622$ \\
$1500$ & $22$ & $65$ & $0.840$ & $0.835$ \\
$2000$ & $27$ & $66$ & $0.838$ & \ok{$0.838$} \\
\midrule
\multicolumn{5}{l}{\textit{(b) shift, $1500$ images}} \\
none                  & \ok{$22$} & \ok{$65$} & \ok{$0.840$} & \ok{$0.835$} \\
grayscale             & $28$ & $78$ & $0.831$ & $0.831$ \\
strong augmentation   & $63$ & \nr  & $0.761$ & $0.761$ \\
$5$ of $10$ classes   & $23$ & $71$ & $0.828$ & $0.636$ \\
$2$ of $10$ classes   & \nr  & \nr  & $0.160$ & $0.102$ \\
\bottomrule
\end{tabular}
\end{table}

Overall, Algorithm~\ref{alg:grad_bant} is robust to the size of the trust set and to moderate shifts in its appearance. The key requirement is that the trust set covers all classes; its size mainly affects the stability of training in later rounds.

\paragraph{Training on the trust set alone.}
To check whether federated training is necessary when a trust set is available, we train ResNet-18 centrally on the trust set alone, using the same optimizer (Adam, learning rate $0.003$), batch size ($64$), and augmentation as the clients. The budget is $12\,400$ optimizer steps, which matches the local computation of one client over $200$ rounds. Table~\ref{tab:reference-only} shows that this baseline is far behind federated training. With $1500$ images, the best accuracy is $0.609$, whereas Algorithm~\ref{alg:grad_bant} with a trust set of the same size reaches $0.913$ within $100$ rounds, even with $60\%$ Byzantine clients (Table~\ref{tab:trust}). The gap grows as the trust set shrinks: with $100$ images, training on the trust set alone reaches at most $0.287$, while the federated method still reaches $0.894$. Even with $2000$ images, the best accuracy is only $0.653$. Moreover, the accuracy saturates early: between the $100$th and $200$th round-equivalents, it changes by less than $0.02$ for all sizes, and the final accuracy is below the best one, which suggests overfitting to the small sample.

\begin{table}[h!]
\centering
\caption{Training ResNet-18 on the trust set alone, without federation. @$r$: CIFAR-10 test accuracy after $62r$ optimizer steps, i.e., the computation of one client over $r$ rounds. Epochs: passes over the trust set. Mean $\pm$ std over $3$ seeds; each seed samples a different trust set. Federated training with the same trust set sizes reaches $0.89$--$0.92$ (Table~\ref{tab:trust}\,(a)).}
\label{tab:reference-only}
\small
\setlength{\tabcolsep}{4pt}
\begin{tabular}{r r r r r r r}
\toprule
Trust set & Epochs & @20 & @50 & @100 & @200 & Best \\
\midrule
$100$  & $7936$ & $0.258\pm0.019$ & $0.260\pm0.014$ & $0.280\pm0.012$ & $0.270\pm0.015$ & $0.287\pm0.010$ \\
$500$  & $1587$ & $0.387\pm0.021$ & $0.407\pm0.027$ & $0.410\pm0.046$ & $0.428\pm0.024$ & $0.439\pm0.023$ \\
$1000$ & $794$  & $0.466\pm0.015$ & $0.496\pm0.011$ & $0.515\pm0.016$ & $0.501\pm0.003$ & $0.542\pm0.012$ \\
$1500$ & $529$  & $0.523\pm0.017$ & $0.554\pm0.021$ & $0.577\pm0.015$ & $0.578\pm0.008$ & $0.609\pm0.006$ \\
$2000$ & $397$  & $0.554\pm0.015$ & $0.607\pm0.012$ & $0.615\pm0.012$ & $0.628\pm0.011$ & $0.653\pm0.004$ \\
\bottomrule
\end{tabular}
\end{table}

\subsection{Additional experiments}
Table~\ref{tab:cp-real-delta} extends the comparison of Table~\ref{tab:adaptive-eps} to compression rates $k/n \in \{0.5, 0.3, 0.1\}$ in the same setting: the distortion is measured between the stochastic ResNet-18 gradient and the trust-set gradient over $100$ pairs.

As expected, the inner-product distortion $\Delta_{\mathrm{ip}}$ increases monotonically for all compressors as the compression becomes stronger. Reducing $k/n$ from $0.5$ to $0.1$ increases $\Delta_{\mathrm{ip}}$ by a factor of $3.0$--$5.2$ for Kronecker JL and of about $5$ for layer-wise SRHT and SRFT.

Kronecker JL remains competitive with layer-wise SRHT and SRFT at all compression rates: at every $k/n$, one of its variants attains the smallest mean $\Delta_{\mathrm{ip}}$ among all compressors, although the differences are within one standard deviation. The angle errors of all JL-type methods are of the same order, ranging from $0.55^\circ$ at $k/n = 0.5$ to $4.2^\circ$ at $k/n = 0.1$.

Applying SRHT and SRFT to the whole parameter vector does not reduce the distortion. At $k/n = 0.5$, it increases $\Delta_{\mathrm{ip}}$ by a factor of $2.9$ for SRHT and $3.9$ for SRFT; at smaller $k/n$, the whole-vector variants are comparable to the layer-wise ones or worse by up to a factor of $1.5$.

Rand-$k$ yields the largest $\Delta_{\mathrm{ip}}$ at every compression rate, $4$--$38$ times larger than that of the other compressors, and its distortion grows the fastest as $k/n$ decreases. At the same time, its angle error remains comparable to that of the other methods.

Finally, Adaptive JL with target $\varepsilon = 0.3$ selects $k/n = 0.240$ and attains $\Delta_{\mathrm{ip}}$ comparable to that of Gaussian Kronecker JL at $k/n = 0.5$ ($96$ vs.\ $94 \cdot 10^{-4}$), i.e., with less than half the message size.

\begin{table}[h!]
\centering
\caption{Distortion of inner products ($\Delta_{\mathrm{ip}}$) and angles ($\Delta_\theta$) under compression at different compression rates: stochastic ResNet-18 gradient vs.\ trust-set gradient, $100$ pairs, mean $\pm$ std. The rows for $k/n = 0.5$ and for Adaptive JL are repeated from Table~\ref{tab:adaptive-eps}. For Adaptive JL, $k/n = 0.240$ is set implicitly by the target $\varepsilon = 0.3$ from~\eqref{eq:en_aware}.}
\label{tab:cp-real-delta}
\small
\setlength{\tabcolsep}{5pt}
\begin{tabular}{l l r r}
\toprule
Compressor & Variant & $\Delta_{\mathrm{ip}}$, $10^{-4}$ & $\Delta_\theta$, deg \\
\midrule
\multicolumn{4}{l}{\textit{$k/n = 0.5$}} \\
Kronecker JL (ours) & Rad.   & $194\pm156$  & $1.1\pm1.7$ \\
                    & Gauss. & $94\pm86$    & $0.8\pm0.7$ \\
SRHT                & layer  & $103\pm80$   & $0.55\pm0.43$ \\
                    & whole  & $302\pm251$  & $1.31\pm1.23$ \\
SRFT                & layer  & $120\pm125$  & $0.63\pm0.58$ \\
                    & whole  & $470\pm198$  & $2.1\pm1.11$ \\
Rand-$k$            &        & $2000\pm800$ & $1.4\pm1.1$ \\
\midrule
\multicolumn{4}{l}{\textit{$k/n = 0.3$}} \\
Kronecker JL (ours) & Rad.   & $300\pm198$ & $1.63\pm0.98$ \\
                    & Gauss. & $390\pm250$ & $1.69\pm1.35$ \\
SRHT                & layer  & $474\pm283$ & $1.72\pm0.96$ \\
                    & whole  & $456\pm341$ & $2.63\pm2.11$ \\
SRFT                & layer  & $534\pm313$ & $2.13\pm2.09$ \\
                    & whole  & $632\pm412$ & $1.87\pm1.31$ \\
Rand-$k$            &        & $4963\pm914$ & $1.57\pm1.16$ \\
\midrule
\multicolumn{4}{l}{\textit{$k/n = 0.1$}} \\
Kronecker JL (ours) & Rad.   & $590\pm490$ & $3.51\pm2.93$ \\
                    & Gauss. & $490\pm490$ & $4.17\pm3.92$ \\
SRHT                & layer  & $512\pm481$ & $2.1\pm1.4$ \\
                    & whole  & $715\pm653$ & $4.15\pm3.31$ \\
SRFT                & layer  & $614\pm541$ & $3.04\pm2.98$ \\
                    & whole  & $891\pm753$ & $3.1\pm2.9$ \\
Rand-$k$            &        & $18500\pm7900$ & $3.3\pm2.4$ \\
\midrule
\multicolumn{4}{l}{\textit{adaptive $k/n$}} \\
Adaptive JL & target $\varepsilon = 0.3$ & $96\pm77$ & $0.51\pm0.43$ \\
\bottomrule
\end{tabular}
\end{table}

Table~\ref{tab:traffic} shows that stronger compression substantially reduces the communication required to reach $80\%$ test accuracy. Moderate compression does not slow down training: in all settings, the number of rounds needed to reach $80\%$ is smallest at $k/n = 0.5$. At $k/n = 0.1$, more rounds are required, but each round is much cheaper, so the total traffic decreases by a factor of $2.8$--$4.0$ compared to $k/n = 0.5$ and by a factor of $6.7$--$10.6$ compared to $k/n = 0.9$. At the same time, the best accuracy at $k/n = 0.1$ is at most $0.018$ lower than at $k/n = 0.5$.

\begin{table}[h]
\centering
\caption{Communication efficiency of \oursjl{} for different compression rates $k/n$ (IID data, $200$ rounds, single seed): (a)~rounds and (b)~traffic in full-gradient uploads (a round costs $k/n$ units) to reach $80\%$ test accuracy; (c)~best test accuracy. Columns $0.9$--$0.5$ use Rademacher JL, columns $0.3$--$0.1$ use Gaussian JL.}
\label{tab:traffic}
\small
\setlength{\tabcolsep}{5pt}
\begin{tabular}{l rrrrr}
\toprule
$k/n$ & $0.9$ & $0.7$ & $0.5$ & $0.3$ & $0.1$ \\
\midrule
\multicolumn{6}{l}{\textit{(a) rounds to $80\%$}} \\
No attack               & $54$ & $45$ & $41$ & $45$ & $73$ \\
Label flipping $60\%$   & $71$ & $62$ & $57$ & $60$ & $71$ \\
Sign flipping $60\%$    & $68$ & $47$ & $46$ & $61$ & $58$ \\
IPM $70\%$              & $68$ & $56$ & $48$ & $51$ & $68$ \\
\midrule
\multicolumn{6}{l}{\textit{(b) traffic to $80\%$, full-gradient uploads}} \\
No attack               & $48.6$ & $31.5$ & $20.5$ & $13.5$ & \ok{$7.3$} \\
Label flipping $60\%$   & $63.9$ & $43.4$ & $28.5$ & $18.0$ & \ok{$7.1$} \\
Sign flipping $60\%$    & $61.2$ & $32.9$ & $23.0$ & $18.3$ & \ok{$5.8$} \\
IPM $70\%$              & $61.2$ & $39.2$ & $24.0$ & $15.3$ & \ok{$6.8$} \\
\midrule
\multicolumn{6}{l}{\textit{(c) best accuracy in $200$ rounds}} \\
No attack               & $0.900$ & $0.906$ & $0.910$ & $0.906$ & $0.893$ \\
Label flipping $60\%$   & $0.853$ & $0.868$ & $0.862$ & $0.865$ & $0.857$ \\
Sign flipping $60\%$    & $0.857$ & $0.875$ & $0.878$ & $0.868$ & $0.860$ \\
IPM $70\%$              & $0.854$ & $0.875$ & $0.827$ & $0.870$ & $0.861$ \\
\bottomrule
\end{tabular}
\end{table}

\label{app:additional_experiments}

\clearpage
\section{Technical Statements}\label{appendix:basic}

Statement below is crucial for deriving high-probability convergence of proposed methods
\begin{lemma}
    Suppose we have events $E_1, E_2, \ldots, E_n$ that holds with following probabilities:
    \[\PP\left[E_i\right] \geq 1-\delta_i,\]
    Then,
    \[\PP\left[\bigcap\limits_{i=1}^n E_i\right] \geq 1 - \sum\limits_{i=1}^n \delta_i\]
    \begin{proof}
        \begin{eqnarray*}
            \PP\left[\bigcap\limits_{i=1}^n E_i\right] = 1 - \PP\left[\bigcup\limits_{i=1}^n \overline{E_i}\right] \geq 1 - \sum\limits_{i=1}^n\PP\left[\overline{E_i} \right] \geq 1 - \sum\limits_{i=1}^n \delta_i
        \end{eqnarray*}
    \end{proof}
\end{lemma}

Next lemma will be useful throughout some theorems, allowing us to deal with norms:
\begin{lemma}[Young's inequality]
    Let $x,y \in \RR^n$. Then, for any $\alpha > 0$ we have
    \[\<x,y> \leq \frac{1}{2\alpha}\|x\|^2 + \frac{\alpha}{2}\|y\|^2.\]
\end{lemma}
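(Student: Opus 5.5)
The statement is Young's inequality $\<x,y> \leq \frac{1}{2\alpha}\|x\|^2 + \frac{\alpha}{2}\|y\|^2$ for $x,y\in\RR^n$ and $\alpha>0$. The plan is to derive it from nonnegativity of a single squared norm.

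The natural vector is $\frac{1}{\sqrt{\alpha}}x - \sqrt{\alpha}\,y$. Expanding its squared norm with the bilinearity and symmetry of the inner product gives
\[
0 \leq \left\|\tfrac{1}{\sqrt{\alpha}}x - \sqrt{\alpha}\,y\right\|^2 = \tfrac{1}{\alpha}\|x\|^2 - 2\<x,y> + \alpha\|y\|^2 .
\]
Moving $2\<x,y>$ to the left and dividing by $2$ yields exactly the claimed bound.

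There is no genuine obstacle. The one thing to check is the scaling: the factors $\alpha^{-1/2}$ and $\alpha^{1/2}$ must multiply to $1$, so that the cross term has no $\alpha$ in it. The assumption $\alpha>0$ is what makes both square roots well defined.

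An alternative route is Cauchy--Schwarz, $\<x,y>\leq\|x\|\|y\|$, followed by the scalar AM--GM inequality $ab\leq \frac{a^2}{2\alpha}+\frac{\alpha b^2}{2}$ for $a,b\geq 0$. The squared-norm argument is more direct, so I would use that one.
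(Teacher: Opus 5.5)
Your proof is correct: expanding $\|\alpha^{-1/2}x-\alpha^{1/2}y\|^2\geq 0$ gives $2\langle x,y\rangle\leq\frac{1}{\alpha}\|x\|^2+\alpha\|y\|^2$, which is the claim after dividing by $2$. The paper states this lemma as a standard fact without proof, and your squared-norm argument is the standard one, so nothing is missing.
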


\clearpage

\section{JL Statements}\label{ap:jl_sec}

We start with the properties of a single sketch: inner-product preservation,
moments of the lifted vector, and high-probability norm bounds. We then
show how these properties extend to structured matrices, separate layers,
and tensor modes. Throughout, $Sx$ is the compressed vector and $S^\top Sx$
is the lifted vector used in the update. We write $\|\cdot\|$ for the
Euclidean norm of a vector or the operator norm of a matrix, and
$\|\cdot\|_F$ for the Frobenius norm of a matrix or tensor.

\subsection{From norm preservation to inner products}

\begin{lemma}[Preservation of scalar product]\label{lem:app_inner_product}
Let $S$ be a linear JL transform with distortion $\e$ and one-vector failure
probability $\delta$. For every fixed $x,y\in\RR^n$,
\[
\PP\left[
\left|\langle Sx,Sy\rangle-\langle x,y\rangle\right|
\le \e\|x\|\|y\|
\right]\ge 1-2\delta.
\]
\end{lemma}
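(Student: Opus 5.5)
The plan is the polarization argument combined with a reduction to unit vectors and a union bound (the first technical lemma of Appendix~\ref{appendix:basic}).

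\textbf{Reduction to unit vectors.} If $x=0$ or $y=0$, then by linearity $Sx=0$ or $Sy=0$, so both inner products vanish and the event holds with probability one. Otherwise set $\bar x=x/\|x\|$ and $\bar y=y/\|y\|$. By bilinearity of $\langle\cdot,\cdot\rangle$ and linearity of $S$, the target event is equivalent to $|\langle S\bar x,S\bar y\rangle-\langle\bar x,\bar y\rangle|\le\e$. So it suffices to treat $\|x\|=\|y\|=1$.

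\textbf{Two norm events.} Since $S$ is linear, $S0=0$, and Definition~\ref{def:jl} applied to the pairs $(\bar x,-\bar y)$ and $(\bar x,\bar y)$ gives the events
\[
E_+=\big\{|\|S(\bar x+\bar y)\|^2-\|\bar x+\bar y\|^2|\le\e\|\bar x+\bar y\|^2\big\},\qquad
E_-=\big\{|\|S(\bar x-\bar y)\|^2-\|\bar x-\bar y\|^2|\le\e\|\bar x-\bar y\|^2\big\}.
\]
Each has probability at least $1-\delta$, so by the union-bound lemma $\PP[E_+\cap E_-]\ge1-2\delta$. These are the only probabilistic steps, and they account for the factor $2\delta$.

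\textbf{Deterministic step on $E_+\cap E_-$.} Polarization holds both in $\RR^n$ and in $\RR^k$:
\[
\langle S\bar x,S\bar y\rangle=\tfrac14\big(\|S(\bar x+\bar y)\|^2-\|S(\bar x-\bar y)\|^2\big),\qquad
\langle \bar x,\bar y\rangle=\tfrac14\big(\|\bar x+\bar y\|^2-\|\bar x-\bar y\|^2\big).
\]
Subtracting and applying the triangle inequality gives
\[
|\langle S\bar x,S\bar y\rangle-\langle\bar x,\bar y\rangle|\le\tfrac{\e}{4}\big(\|\bar x+\bar y\|^2+\|\bar x-\bar y\|^2\big).
\]
By the parallelogram law the bracket equals $2\|\bar x\|^2+2\|\bar y\|^2=4$, so the right-hand side is exactly $\e$. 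Undoing the normalization yields $\e\|x\|\|y\|$.

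\textbf{Main obstacle.} The only delicate point is the normalization. Applying polarization directly to $x,y$ gives the bound $\tfrac{\e}{2}(\|x\|^2+\|y\|^2)$, which is weaker than the claim. Rescaling to unit vectors first fixes this, and it is legitimate because the events are defined for the fixed vectors $\bar x\pm\bar y$, which are deterministic rescalings of $x$ and $y$. Everything else is routine.
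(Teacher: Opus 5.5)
Your proposal is correct and is essentially the paper's proof: you reduce to unit vectors by bilinearity, apply the JL property to $\bar x+\bar y$ and $\bar x-\bar y$ with a union bound to get probability $1-2\delta$, and then use polarization with the parallelogram law to bound the deviation by $\frac{\e}{4}\cdot 4=\e$ before rescaling by $\|x\|\|y\|$. Your remark that normalizing first is what improves $\frac{\e}{2}(\|x\|^2+\|y\|^2)$ to $\e\|x\|\|y\|$ is exactly the role of the paper's final rescaling step.
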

\begin{proof}
The claim is immediate if either vector is zero. By linearity, it is enough
to prove it for unit vectors. Apply the JL property to $x+y$ and $x-y$.
With probability at least $1-2\delta$, both squared norms are preserved
within relative error $\e$. On this event, polarization and the
parallelogram identity give
\[
\begin{split}
\left|\langle Sx,Sy\rangle-\langle x,y\rangle\right|
&\le \frac{\e}{4}\left(\|x+y\|^2+\|x-y\|^2\right)\\
&=\frac{\e}{2}\left(\|x\|^2+\|y\|^2\right)
=\e.
\end{split}
\]
For general nonzero vectors, apply this bound to $x/\|x\|$ and
$y/\|y\|$, and multiply by $\|x\|\|y\|$.
\end{proof}

\begin{lemma}[Simultaneous upper JL bound]\label{c_e}
Let $S$ be a linear JL transform with distortion $\e$ and one-vector failure
probability $\delta$. For fixed vectors $v_1,\dots,v_m\in\RR^n$,
\[
\PP\left[
\|Sv_i\|^2\le(1+\e)\|v_i\|^2
\quad\text{for all }i=1,\dots,m
\right]\ge1-m\delta.
\]
\end{lemma}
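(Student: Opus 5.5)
The plan is to apply the JL property of Definition~\ref{def:jl} to each fixed vector $v_i$ separately and then combine the resulting events with the union bound lemma of Appendix~\ref{appendix:basic}.

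First, since $S$ is linear we have $S(0)=0$. Taking $u=v_i$ and $v=0$ in Definition~\ref{def:jl} then gives an event $E_i$ with $\PP[E_i]\ge1-\delta$. On $E_i$ we have $(1-\e)\|v_i\|^2\le\|Sv_i\|^2\le(1+\e)\|v_i\|^2$. Dropping the lower inequality, $E_i$ is contained in the event $\{\|Sv_i\|^2\le(1+\e)\|v_i\|^2\}$, so that event also has probability at least $1-\delta$. If some $v_i=0$, the inequality holds deterministically and the corresponding event has probability one, which only improves the bound.

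Second, the vectors $v_1,\dots,v_m$ are fixed, meaning they do not depend on $S$. The one-vector guarantee therefore applies to each of them with the same sketch $S$, and no independence between the events is needed. The union bound lemma gives
\[
\PP\Big[\bigcap_{i=1}^m\{\|Sv_i\|^2\le(1+\e)\|v_i\|^2\}\Big]\ge1-\sum_{i=1}^m\delta=1-m\delta,
\]
which is the claim.

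There is no real obstacle here. The only point needing care is that the $v_i$ must be chosen independently of $S$. In the later application this means conditioning on the history before $S_t$ is sampled, which fixes the honest gradients.
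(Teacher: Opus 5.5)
Your proposal is correct and matches the paper's proof: each one-vector upper bound fails with probability at most $\delta$ (obtained from the JL definition with $v=0$ via linearity), and a union bound over the $m$ fixed vectors gives the bound $1-m\delta$ without any independence between the events.
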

\begin{proof}
Each upper bound fails with probability at most $\delta$. Take the union
bound over the $m$ vectors.
\end{proof}

\subsection{Dense sketches: moments and operator norms}

The next identity gives the moments of both dense sketches and the sparse
Gaussian construction below. It also allows an independent orthogonal
transformation before sketching.

\begin{lemma}[Moments of a normalized i.i.d. matrix]\label{lem:app_iid_core}
Let $A\in\RR^{k\times n}$ have i.i.d. entries with
\[
\EE A_{11}=0,\qquad
\EE A_{11}^2=1,\qquad
\EE A_{11}^4=\mu_4<\infty.
\]
Let $U\in\RR^{n\times n}$ be orthogonal and independent of $A$; $U$ may be
deterministic or random. For $S=AU/\sqrt{k}$,
\[
\EE[S^\top S]=I_n,
\qquad
\EE\|S^\top Sx\|^2
=\left(1+\frac{n-2+\mu_4}{k}\right)\|x\|^2
.
\]
The second identity holds for every fixed $x\in\RR^n$.
\end{lemma}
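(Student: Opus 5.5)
The unbiasedness claim follows from a direct entrywise computation, and the second moment from expanding $\|S^\top Sx\|^2$ as a quadruple sum and keeping only index patterns with nonzero expectation. Since $U$ is orthogonal and independent of $A$, I first condition on $U$. Then $S^\top S = U^\top (A^\top A/k) U$ and $\|S^\top S x\|^2 = \|U^\top (A^\top A/k)Ux\|^2 = \|(A^\top A/k) z\|^2$ with $z = Ux$ and $\|z\|=\|x\|$. It therefore suffices to prove both identities for $U=I_n$ and a fixed vector $z$, and then take the outer expectation over $U$. For the mean, $\EE[A^\top A]_{jl} = \sum_{r=1}^k \EE A_{rj}A_{rl} = k\,\mathbb{I}_{[j=l]}$. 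Hence $\EE[A^\top A/k]=I_n$ and $\EE[S^\top S]=U^\top I_n U=I_n$.

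For the second moment, write $w = A z\in\RR^k$, so $\|A^\top A z\|^2 = \|A^\top w\|^2 = \sum_{j=1}^n \left(\sum_{r=1}^k A_{rj} w_r\right)^2$. Expanding gives
\[
\EE\|A^\top A z\|^2=\sum_{j=1}^n\sum_{r,s=1}^k\sum_{p,q=1}^n \EE\left[A_{rj}A_{sj}A_{rp}A_{sq}\right] z_p z_q .
\]
Because the entries are independent and centered, only pairings survive, and I split into cases.

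When $r\neq s$, independence of rows forces $p=j$ and $q=j$. This yields $\EE A_{rj}^2\,\EE A_{sj}^2\, z_j^2=z_j^2$, which after summing over $j$ and the $k(k-1)$ ordered pairs gives $k(k-1)\|z\|^2$.

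When $r=s$, the term is $\EE[A_{rj}^2A_{rp}A_{rq}]z_pz_q$, which is nonzero only if $p=q$. The subcase $p=q=j$ contributes $\mu_4 z_j^2$. The subcase $p=q\neq j$ contributes $z_p^2$, and summing over $j$ gives $(n-1)\|z\|^2$. Summing over the $k$ choices of $r$ yields $k(\mu_4+n-1)\|z\|^2$.

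Adding both cases and dividing by $k^2$ gives
\[
\frac{k(k-1)+k(n-1+\mu_4)}{k^2}\|z\|^2 = \left(1+\frac{n-2+\mu_4}{k}\right)\|x\|^2 .
\]
This is deterministic given $U$, so it survives the outer expectation over $U$.

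The only delicate step is the bookkeeping of index coincidences in the $r=s$ case, where the fourth moment $\mu_4$ must not be double counted with the $p=q\neq j$ terms. The independence of $U$ and $A$ is what makes the conditioning step legitimate. As a check, $\mu_4=3$ (Gaussian) and $\mu_4=1$ (Rademacher) recover Lemma~\ref{lem:variance}.
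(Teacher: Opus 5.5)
Your proposal is correct and follows essentially the same route as the paper. Both arguments reduce to $U=I_n$ by conditioning on $U$ and replacing $x$ with $Ux$, then split the expansion of $\EE\|A^\top Az\|^2$ into the $k(k-1)$ pairs of distinct rows, each contributing $\|z\|^2$, and the $k$ diagonal terms, each contributing $(n-1+\mu_4)\|z\|^2$; the only cosmetic difference is that you do the pairing entrywise via a quadruple sum, whereas the paper writes the same terms as $\EE[\langle a_r,a_s\rangle\langle a_r,x\rangle\langle a_s,x\rangle]$ over the rows $a_r$ of $A$.
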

\begin{proof}
First take $U=I_n$, and write the rows of $A$ as column vectors
$a_1,\dots,a_k$. Since $\EE[a_ra_r^\top]=I_n$,
\[
\EE[S^\top S]
=\frac1k\sum_{r=1}^k\EE[a_ra_r^\top]=I_n.
\]
For the second moment, expand
\[
\EE\|A^\top Ax\|^2
=\sum_{r,s=1}^k
\EE\left[
\langle a_r,a_s\rangle\langle a_r,x\rangle\langle a_s,x\rangle
\right].
\]
If $r\ne s$, conditioning on $a_r$ shows that the corresponding expectation
is $\EE\langle a_r,x\rangle^2=\|x\|^2$. For a diagonal term, independence
and centering of the entries remove all mixed terms, giving
\[
\begin{split}
\EE\left[\|a_r\|^2\langle a_r,x\rangle^2\right]
&=\sum_{j=1}^n\left(\sum_{i\ne j}x_i^2+\mu_4x_j^2\right)\\
&=(n-1+\mu_4)\|x\|^2.
\end{split}
\]
There are $k(k-1)$ off-diagonal terms and $k$ diagonal terms. Dividing their
sum by $k^2$ yields
\[
\EE\|S^\top Sx\|^2
=\frac{k(k-1)+k(n-1+\mu_4)}{k^2}\|x\|^2
=\left(1+\frac{n-2+\mu_4}{k}\right)\|x\|^2.
\]
For general $U$, condition on $U$. Orthogonality reduces the calculation
to the case just proved with $x$ replaced by $Ux$, whose norm equals
$\|x\|$. Averaging over $U$ proves both identities.
\end{proof}

\begin{lemma}[Gaussian and Rademacher moments]
\label{lem:app_gaussian_moments}\label{lem:app_rademacher_moments}
Let $S\in\RR^{k\times n}$ have independent entries, either
$S_{ij}\sim\mathcal N(0,1/k)$ or
$S_{ij}\in\{-1/\sqrt{k},1/\sqrt{k}\}$ with equal probabilities. In both
cases $\EE[S^\top S]=I_n$, and for every fixed $x\in\RR^n$,
\[
\EE\|S^\top Sx\|^2
=
\begin{cases}
\displaystyle\frac{n+k+1}{k}\|x\|^2,&\text{Gaussian},\\[2mm]
\displaystyle\frac{n+k-1}{k}\|x\|^2,&\text{Rademacher}.
\end{cases}
\]
\end{lemma}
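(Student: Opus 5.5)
The plan is to derive this statement as a direct specialization of Lemma~\ref{lem:app_iid_core} with $U=I_n$. Both constructions can be written as $S=A/\sqrt{k}$, where $A\in\RR^{k\times n}$ has i.i.d.\ entries. In the Gaussian case the entries are $A_{ij}\sim\cN(0,1)$. In the Rademacher case they are $A_{ij}\in\{-1,1\}$, each with probability $\tfrac12$.

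First, I will check the hypotheses of Lemma~\ref{lem:app_iid_core}. In both cases the entries are centered with unit variance, since $\EE A_{11}=0$ and $\EE A_{11}^2=1$. The fourth moments are finite: a standard normal has $\mu_4=3$, while a Rademacher variable has $A_{11}^4\equiv1$, so $\mu_4=1$. The lemma then gives $\EE[S^\top S]=I_n$ directly.

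For the second moment, the lemma gives $\EE\|S^\top Sx\|^2=\left(1+\frac{n-2+\mu_4}{k}\right)\|x\|^2$. Substituting the fourth moments:
\begin{itemize}
    \item Gaussian, $\mu_4=3$: this equals $\frac{k+n+1}{k}\|x\|^2$.
    \item Rademacher, $\mu_4=1$: this equals $\frac{k+n-1}{k}\|x\|^2$.
\end{itemize}
These match the two stated formulas. They also recover Lemmas~\ref{lem:unbias} and~\ref{lem:variance}, because $\EE\,\textup{JL}_S(x)=\EE[S^\top S]x=x$.

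There is no real obstacle here. The only points needing care are the normalization, namely that the entry variance $1/k$ corresponds to $A=\sqrt{k}\,S$ having unit variance, and recalling the fourth moments $\mu_4=3$ and $\mu_4=1$ correctly.
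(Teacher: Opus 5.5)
Your proposal is correct and matches the paper's proof. The paper also specializes Lemma~\ref{lem:app_iid_core} to $U=I_n$ with $\mu_4=3$ in the Gaussian case and $\mu_4=1$ in the Rademacher case, which gives the factors $1+\frac{n+1}{k}$ and $1+\frac{n-1}{k}$.
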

\begin{proof}
Apply Lemma~\ref{lem:app_iid_core} with $U=I_n$. The fourth moment of a
standard Gaussian entry is $3$, and that of a Rademacher entry is $1$.
\end{proof}

We next bound the operator norm. Every such bound also controls the
reconstructed vector, since for every matrix $S$ and every $x$,
\[
\|S^\top S\|=\|S\|^2,
\qquad
\|S^\top Sx\|^2\le\|S\|^4\|x\|^2.
\]
The following standard estimate will be used for the Rademacher and sparse
Gaussian constructions. Its subgaussian norm is defined, for a scalar $X$
and a random vector $Z$, by
\[
\|X\|_{\psi_2}
:=\inf\{s>0:\EE e^{X^2/s^2}\le2\},
\qquad
\|Z\|_{\psi_2}
:=\sup_{\|u\|=1}\|\langle Z,u\rangle\|_{\psi_2}.
\]

\begin{lemma}[Subgaussian operator-norm estimate \citep{Vershynin_2026}]
\label{lem:app_subgaussian_operator_norm}
Let $A\in\RR^{m\times n}$ have independent, mean-zero, isotropic,
subgaussian rows $A_1,\dots,A_m$, with $\|A_i\|_{\psi_2}\le K$ for all
$i$. There is a universal constant $C>0$ such that, for every $t\ge0$,
\[
\PP\left[
\|A\|\le\sqrt m+CK^2(\sqrt n+t)
\right]\ge1-2e^{-t^2}.
\]
\end{lemma}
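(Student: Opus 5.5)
This is the standard subgaussian operator-norm estimate, and I would prove it by an $\varepsilon$-net argument combined with a Bernstein-type concentration bound for sums of subexponential variables.

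\textbf{Step 1: reduce to a spectral-deviation bound.} Set $B=\frac1m A^\top A-I_n$. If
\[
\|B\|\le\max\{\tau,\tau^2\},\qquad \tau=CK^2\frac{\sqrt n+t}{\sqrt m},
\]
then $\|Ax\|^2\le m(1+\max\{\tau,\tau^2\})\le m(1+\tau)^2$ for every unit $x$. Hence
\[
\|A\|\le\sqrt m(1+\tau)=\sqrt m+CK^2(\sqrt n+t),
\]
which is the claimed bound. So it suffices to control $\|B\|$.

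\textbf{Step 2: discretize the sphere.} Fix a $1/4$-net $\mathcal N$ of the sphere $S^{n-1}$ with $|\mathcal N|\le 9^n$. For the symmetric matrix $B$ we have
\[
\|B\|\le 2\max_{x\in\mathcal N}|\langle Bx,x\rangle|.
\]
For fixed $x$,
\[
\langle Bx,x\rangle=\frac1m\sum_{i}\left(\langle A_i,x\rangle^2-1\right).
\]
Isotropy makes each summand mean zero. Each $\langle A_i,x\rangle^2-1$ is subexponential with $\psi_1$-norm $\lesssim K^2$, and the summands are independent.

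\textbf{Step 3: concentrate at each net point.} Bernstein's inequality gives
\[
\PP\left[|\langle Bx,x\rangle|\ge\varepsilon/2\right]\le 2\exp\left(-c\,m\min\{\varepsilon^2/K^4,\varepsilon/K^2\}\right).
\]
Take $\varepsilon=K^2\max\{\tau,\tau^2\}$ with $\tau$ as above. Then
\[
m\min\{\varepsilon^2/K^4,\varepsilon/K^2\}\ge K^4 m\tau^2=C^2K^{8}(\sqrt n+t)^2\ge C^2(n+t^2).
\]
Here I use $K\gtrsim1$, which follows from isotropy. A union bound over the $9^n$ net points then gives failure probability at most
\[
2\cdot 9^n e^{-cC^2(n+t^2)}\le 2e^{-t^2}
\]
once $C$ is large.

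\textbf{Main obstacle.} The delicate part is the bookkeeping of the powers of $K$. The final bound needs the factor $K^2$ in front of $(\sqrt n+t)$, not $K$ or $K^4$. The deviation must therefore be measured as $\|B\|\le K^2\max\{\tau,\tau^2\}$, and absorbing this into $\sqrt m(1+\tau)$ requires $K\ge c_0$ together with a rescaling of $C$. I would also double-check the mixed regime $\tau\approx1$, where the maximum switches between $\tau$ and $\tau^2$. Once these points are checked, the statement follows exactly as cited from \citep{Vershynin_2026}.
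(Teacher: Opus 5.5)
Your route is the standard net-plus-Bernstein proof, and it is the proof in the book the paper cites; the paper itself gives no proof of this lemma. However, Step 3 contains an error in exactly the $K$-bookkeeping you flagged, and as written it does not give the stated power of $K$.

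\textbf{The gap.} Your $\tau=CK^2(\sqrt n+t)/\sqrt m$ already contains the factor $K^2$, and you then take $\varepsilon=K^2\max\{\tau,\tau^2\}$. The net argument therefore only yields $\|B\|\le K^2\max\{\tau,\tau^2\}$. Step~1 needs the smaller bound $\|B\|\le\max\{\tau,\tau^2\}$, so you are off by the factor $K^2\ge1$. Since $K$ is unbounded, this factor cannot be absorbed by rescaling the universal constant $C$. Pushing your event through Step~1 gives only $\|A\|\le\sqrt m+C'K^4(\sqrt n+t)$. The absorption via $K\ge1$ that you describe works only in the opposite direction, namely when the deviation parameter does \emph{not} carry $K$.

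Relatedly, your display $m\min\{\varepsilon^2/K^4,\varepsilon/K^2\}\ge K^4m\tau^2$ is false. With your $\varepsilon$, the minimum equals exactly $\tau^2$ in both regimes $\tau\le1$ and $\tau\ge1$, so the left side is $m\tau^2$, which is smaller than $K^4m\tau^2$. The exponent still suffices for the union bound, but this step is not correct as written.

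\textbf{The fix.} Separate $K$ from the deviation parameter, as the cited source does.
\begin{itemize}
\item Set $\delta=C(\sqrt n+t)/\sqrt m$, with no $K$, and take $\varepsilon=K^2\max\{\delta,\delta^2\}$.
\item Then $\min\{\varepsilon^2/K^4,\varepsilon/K^2\}=\delta^2$ exactly, so the Bernstein exponent is $c\,m\delta^2=cC^2(\sqrt n+t)^2\ge cC^2(n+t^2)$.
\item The union bound over the $9^n$ net points gives failure probability at most $2e^{-t^2}$ once $cC^2\ge\max\{1,\log 9\}$.
\item On this event, $\|B\|\le K^2\max\{\delta,\delta^2\}\le\max\{K^2\delta,(K^2\delta)^2\}$. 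This uses $K\ge1$, which holds exactly under the paper's normalization: $e^u\ge1+u$ gives $\EE X^2\le\|X\|_{\psi_2}^2$, and the rows are isotropic.
\item Your Step~1 now applies with $\tau=K^2\delta$ and gives $\|A\|\le\sqrt m+CK^2(\sqrt n+t)$.
\end{itemize}

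Equivalently, you may keep your $\tau$ and choose $\varepsilon=\max\{\tau,\tau^2\}$ instead. The minimum is then at least $\tau^2/K^4$: in the regime $\max\{\tau,\tau^2\}>K^2\ge1$ one has $\tau>1$, and you use $K\ge1$ there. This again gives the exponent $C^2(\sqrt n+t)^2$. The regime $\tau\approx1$ needs no separate treatment.
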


\begin{lemma}[Gaussian and Rademacher operator norms]
\label{lem:app_gaussian_operator_norm}\label{lem:app_rademacher_operator_norm}
There is a universal constant $C>0$ such that, for a Gaussian or a
Rademacher JL matrix $S\in\RR^{k\times n}$ and every $\delta\in(0,1)$,
with probability at least $1-\delta$,
\[
\|S\|\le
1+C\left(\sqrt{\frac nk}+\sqrt{\frac{\log(2/\delta)}{k}}\right).
\]
Squaring the right-hand side bounds $\|S^\top S\|$; its fourth power
times $\|x\|^2$ bounds $\|S^\top Sx\|^2$ simultaneously for all
$x\in\RR^n$.
\end{lemma}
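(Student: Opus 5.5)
The plan is to reduce Lemma~\ref{lem:app_gaussian_operator_norm} to Lemma~\ref{lem:app_subgaussian_operator_norm} by rescaling. Write $S=A/\sqrt{k}$, where $A\in\RR^{k\times n}$ has i.i.d.\ standard Gaussian or Rademacher entries. The rows $A_1,\dots,A_k$ of $A$ are independent and mean zero. They are isotropic, because $\EE[A_iA_i^\top]=I_n$ when the entries are independent with unit variance.

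Next I would check that the rows are uniformly subgaussian, which is the one substantive input. For a unit vector $u$, the variable $\langle A_i,u\rangle$ is a weighted sum $\sum_j u_jA_{ij}$ of independent centered subgaussian scalars. The standard Hoeffding-type bound for subgaussian sums gives $\|\langle A_i,u\rangle\|_{\psi_2}^2\le C_0\sum_j u_j^2\|A_{ij}\|_{\psi_2}^2$, so $\|A_i\|_{\psi_2}\le K$ with $K$ an absolute constant. In the Gaussian case this is immediate, since $\langle A_i,u\rangle\sim\cN(0,1)$ exactly. In the Rademacher case it follows from Hoeffding's inequality. This step is the only place where the entry distribution matters, and I expect it to be the main (though standard) obstacle. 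Once done, $K$ is a universal constant independent of $n,k$.

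Then I would apply Lemma~\ref{lem:app_subgaussian_operator_norm} with $m=k$ and $t=\sqrt{\log(2/\delta)}$. This choice makes the failure probability $2e^{-t^2}=\delta$. With probability at least $1-\delta$,
\[
\|A\|\le\sqrt{k}+CK^2\left(\sqrt{n}+\sqrt{\log(2/\delta)}\right).
\]
Dividing by $\sqrt{k}$ and absorbing $CK^2$ into a new universal constant yields the claimed bound on $\|S\|$.

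Finally, the consequences are deterministic on this event. We have $\|S^\top S\|=\|S\|^2$, and $\|S^\top Sx\|\le\|S^\top\|\,\|Sx\|\le\|S\|^2\|x\|$. Therefore squaring the bound controls $\|S^\top S\|$, and its fourth power times $\|x\|^2$ bounds $\|S^\top Sx\|^2$ simultaneously for all $x$, since the event does not depend on $x$.
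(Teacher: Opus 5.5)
Your argument is correct and is essentially the paper's proof. For the Rademacher case you follow the paper's steps exactly: you rescale $S=A/\sqrt{k}$, bound the row $\psi_2$-norms by a universal constant via Hoeffding, apply Lemma~\ref{lem:app_subgaussian_operator_norm} with $m=k$ and $t=\sqrt{\log(2/\delta)}$, and read off the consequences from $\|S^\top S\|=\|S\|^2$. The only difference is the Gaussian case. The paper treats it separately through the Gaussian-specific estimate $\PP[\|A\|\ge\sqrt{k}+\sqrt{n}+t]\le 2e^{-c_0t^2}$ (Vershynin, Corollary~7.3.2), which gives coefficient $1$ on $\sqrt{n/k}$. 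You instead fold it into the same subgaussian lemma using $\langle A_i,u\rangle\sim\cN(0,1)$; since the statement only asks for some universal $C$, your uniform treatment is equally valid.
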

\begin{proof}
In the Gaussian case, write $S=A/\sqrt{k}$ with standard Gaussian entries
in $A$. By \citet[Corollary~7.3.2]{Vershynin_2026}, there is a universal
constant $c_0>0$ such that, for every $t\ge0$,
\[
\PP\left[\|A\|\ge\sqrt k+\sqrt n+t\right]\le 2e^{-c_0t^2}.
\]
Taking $t=\sqrt{\log(2/\delta)/c_0}$ and dividing by $\sqrt k$ gives, with
probability at least $1-\delta$,
\[
\|S\|\le1+\sqrt{\frac nk}+\frac{1}{\sqrt{c_0}}\sqrt{\frac{\log(2/\delta)}{k}},
\]
which is the stated bound for any $C\ge\max\{1,c_0^{-1/2}\}$.

In the Rademacher case, write again $S=A/\sqrt{k}$. Its independent rows
are mean-zero and isotropic. For each unit vector $u$, Hoeffding's lemma
gives
\[
\EE e^{\lambda\langle A_i,u\rangle}\le e^{\lambda^2/2}
\quad\text{for all }\lambda\in\RR,
\]
so their subgaussian norms are bounded by a universal constant. Apply
Lemma~\ref{lem:app_subgaussian_operator_norm} with $m=k$ and
$t=\sqrt{\log(2/\delta)}$, and divide by $\sqrt{k}$. Enlarging $C$ if
necessary, one constant serves both constructions. The remaining claims
follow from $\|S^\top S\|=\|S\|^2$ and
$\|S^\top Sx\|\le\|S^\top S\|\|x\|$.
\end{proof}

\subsection{Preserving the total squared norm of several vectors}

The convergence proof needs a bound on the sum of the squared norms of the
compressed honest gradients. Applying a separate JL guarantee to every
gradient would introduce a failure-probability factor equal to the number
of honest workers. For Gaussian and Rademacher sketches, we can instead
control this sum directly.

Place the vectors in the columns of a fixed matrix $V\in\RR^{n\times N}$.
Then $\|V\|_F^2$ is their total squared norm and $\|SV\|_F^2$ is the
corresponding quantity after compression. We first give an elementary
concentration bound used in the Gaussian case.

\begin{lemma}[Weighted chi-square concentration]\label{lem:app_weighted_chi_square}
Let $a_1,\ldots,a_r\geq0$ and let $Z_1,\ldots,Z_r$ be independent
$\chi_k^2$ random variables. For every $\e\in(0,1)$,
\[
    \PP\left[
        \left|\frac1k\sum_{i=1}^r a_iZ_i-\sum_{i=1}^r a_i\right|
        >\e\sum_{i=1}^r a_i
    \right]
    \leq 2e^{-k\e^2/16}.
\]
\begin{proof}
The claim is immediate if all weights vanish. Otherwise, divide the weights
by their sum, so that $\sum_i a_i=1$. For a single collection of independent
standard Gaussian variables $g_i$, the Gaussian moment-generating function
gives, for $0\leq\lambda\leq1/4$,
\[
\begin{aligned}
    \log\EE\exp\left(\lambda\sum_i a_i(g_i^2-1)\right)
    &= -\lambda-\frac12\sum_i\log(1-2\lambda a_i)
      \leq4\lambda^2\sum_i a_i^2\leq4\lambda^2,\\
    \log\EE\exp\left(-\lambda\sum_i a_i(g_i^2-1)\right)
    &= \lambda-\frac12\sum_i\log(1+2\lambda a_i)
      \leq\lambda^2\sum_i a_i^2\leq\lambda^2.
\end{aligned}
\]
The second bound also holds for every $\lambda\geq0$. Write each $Z_i$ as
the sum of $k$ independent squared standard Gaussian variables. Independence
across these $k$ repetitions and the exponential Markov inequality give
upper and lower tail bounds
\[
    e^{-k\lambda\e+4k\lambda^2}
    \quad\text{and}\quad
    e^{-k\lambda\e+k\lambda^2},
\]
respectively. Taking $\lambda=\e/8$ in the first bound and
$\lambda=\e/2$ in the second proves the result.
\end{proof}
\end{lemma}

\begin{lemma}[Preservation of a fixed Frobenius norm]\label{lem:app_one_factor_frobenius}
Let $S\in\RR^{k\times n}$ be a Gaussian or Rademacher JL matrix.
There is a universal constant $c_1>0$ such that, for every fixed matrix
$V\in\RR^{n\times N}$ and every $\e\in(0,1)$,
\[
    \PP\left[
        \left|\|SV\|_F^2-\|V\|_F^2\right|>\e\|V\|_F^2
    \right]
    \leq2e^{-c_1k\e^2}.
\]
In particular, the bound does not depend on the number of columns $N$.
\begin{proof}
The case $V=0$ is immediate.

\emph{Gaussian sketches.}
Take a compact singular value decomposition of $V$, with nonzero singular
values $\sigma_1,\ldots,\sigma_r$. Gaussian rotational invariance gives
\[
    \|SV\|_F^2\overset{d}{=}
    \frac1k\sum_{i=1}^r\sigma_i^2Z_i,
    \qquad
    \|V\|_F^2=\sum_{i=1}^r\sigma_i^2,
\]
where the $Z_i$ are independent $\chi_k^2$ variables.
Apply Lemma~\ref{lem:app_weighted_chi_square} with weights $\sigma_i^2$.

\emph{Rademacher sketches.}
Let $\xi_1,\ldots,\xi_k$ be the rows of $\sqrt{k}S$, viewed as column
vectors, and set $B=VV^\top$. Then
\[
    k\|SV\|_F^2=\sum_{\ell=1}^k\xi_\ell^\top B\xi_\ell,
    \qquad
    \EE\|SV\|_F^2=\operatorname{tr}(B)=\|V\|_F^2.
\]
Apply the Hanson--Wright inequality to the concatenated vector of all
$kn$ independent signs and the block-diagonal matrix with $k$ copies of
$B$. For every $s>0$, this gives
\[
    \PP\left[
        \left|k\|SV\|_F^2-k\|V\|_F^2\right|>s
    \right]
    \leq2\exp\left(
        -c\min\left\{\frac{s^2}{k\|B\|_F^2},
                          \frac{s}{\|B\|}\right\}
    \right).
\]
Since $B$ is positive semidefinite, both $\|B\|$ and $\|B\|_F$ are at
most $\operatorname{tr}(B)=\|V\|_F^2$. With
$s=k\e\|V\|_F^2$ and $0<\e<1$, the minimum is at least $k\e^2$.
Choosing a common constant for the two constructions completes the proof.
\end{proof}
\end{lemma}

This estimate concerns one fixed matrix $V$, rather than all matrices
simultaneously. It also applies conditionally when $V$ is determined by the
past and $S$ is sampled independently afterwards. Together with an
operator-norm bound, it controls the lifted vectors through
\[
    \|S^\top SV\|_F^2\leq\|S\|^2\|SV\|_F^2.
\]
The same Frobenius-norm estimate will allow us to compress a tensor one
mode at a time below.

\subsection{Structured sketches}

The following results give the same moment and operator-norm information
for SRHT, the FJLT with a sparse Gaussian core, and CountSketch.

\begin{lemma}[SRHT]\label{lem:app_srht}
Assume that $n$ is a power of two and $1\le k\le n$. Let $H$ be the
normalized Walsh--Hadamard matrix, let $D$ have independent Rademacher
entries on its diagonal, and let $T\subset[n]$ be a uniformly random
$k$-element subset independent of $D$. If $R_T$ restricts a vector to its
coordinates in $T$, define
\[
S=\sqrt{\frac nk}\,R_THD.
\]
Then
\[
\EE[S^\top S]=I_n,
\qquad
\EE\|S^\top Sx\|^2=\frac nk\|x\|^2
.
\]
The second identity holds for every fixed $x\in\RR^n$. Moreover,
$\|S\|=\sqrt{n/k}$ almost surely.
\end{lemma}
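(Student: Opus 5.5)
The plan is to use the structure $S=\sqrt{n/k}\,R_THD$ and treat the three claims separately. The key facts are that $HD$ is orthogonal and that $R_T$ is a uniformly random coordinate restriction independent of $D$.

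\textbf{Unbiasedness.} Write $S^\top S=\frac nk\,DH^\top R_T^\top R_T HD$. Here $P_T\eqdef R_T^\top R_T$ is the diagonal projection onto the coordinates in $T$. Since $T$ is a uniform $k$-subset, each coordinate lies in $T$ with probability $k/n$, so $\EE P_T=\frac kn I_n$. Conditioning on $D$ and using independence gives $\EE[S^\top S\mid D]=DH^\top HD=I_n$, because $H$ is orthogonal and $D^2=I_n$. Averaging over $D$ proves $\EE[S^\top S]=I_n$.

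\textbf{Second moment.} For fixed $x$, I compute
\[
\|S^\top Sx\|^2=\frac{n^2}{k^2}\|DH^\top P_THDx\|^2=\frac{n^2}{k^2}\|P_THDx\|^2 .
\]
The second equality holds because $DH^\top$ is orthogonal and $P_T^2=P_T$. Conditioning on $D$, the vector $z\eqdef HDx$ has $\|z\|=\|x\|$, and $\EE\|P_Tz\|^2=\frac kn\|z\|^2$ by the same inclusion probabilities. Hence $\EE\|S^\top Sx\|^2=\frac{n^2}{k^2}\cdot\frac kn\|x\|^2=\frac nk\|x\|^2$. Note that this second-moment identity does not need any flattening property of $HD$. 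Only orthogonality and the coordinate sampling enter.

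\textbf{Operator norm.} Since $R_T$ has orthonormal rows, $SS^\top=\frac nk R_THDD^\top H^\top R_T^\top=\frac nk R_TR_T^\top=\frac nk I_k$ for every realization. Therefore all $k$ singular values of $S$ equal $\sqrt{n/k}$, and $\|S\|=\sqrt{n/k}$ almost surely. We need $k\ge1$ so that $S$ is nonzero.

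The only point requiring care is the precise definition of $R_T$ as a $k\times n$ matrix whose rows are distinct standard basis vectors, so that $R_TR_T^\top=I_k$. The assumption that $n$ is a power of two is used only to guarantee that the normalized Walsh--Hadamard matrix $H$ exists and is orthogonal. Otherwise the argument consists of two conditioning steps and contains no real obstacle.
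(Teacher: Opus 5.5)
Your proof is correct and follows the paper's argument: unbiasedness via $\EE[R_T^\top R_T]=\frac{k}{n}I_n$ after conditioning on $D$, and the operator norm via $SS^\top=\frac{n}{k}I_k$. The one small difference is in the second moment. The paper derives $(S^\top S)^2=\frac{n}{k}S^\top S$ from $SS^\top=\frac{n}{k}I_k$ and then reuses unbiasedness, whereas you compute $\EE\|P_THDx\|^2$ directly after conditioning on $D$; both reduce to the same coordinate-sampling calculation.
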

\begin{proof}
The matrix $R_T^\top R_T$ is the coordinate projector onto $T$. Each
coordinate is selected with probability $k/n$, so
\[
\EE[R_T^\top R_T]=\frac knI_n.
\]
Since $HD$ is orthogonal, conditioning on $D$ gives
\[
\EE[S^\top S\mid D]
=\frac nkD H^\top\EE[R_T^\top R_T]HD=I_n.
\]
Also, $R_TR_T^\top=I_k$ implies
\[
SS^\top=\frac nkI_k,
\qquad
(S^\top S)^2=\frac nkS^\top S.
\]
The first identity gives $\|S\|=\sqrt{n/k}$. The second and unbiasedness
give
\[
\EE\|S^\top Sx\|^2
=x^\top\EE[(S^\top S)^2]x
=\frac nk\|x\|^2.
\]
\end{proof}

\begin{lemma}[FJLT with a sparse Gaussian core]\label{lem:app_fjlt}
Assume that $n$ is a power of two. Let $H$ be the normalized
Walsh--Hadamard matrix, let $D$ be a diagonal Rademacher matrix, and let
$P\in\RR^{k\times n}$ have independent entries
\[
P_{ij}=
\begin{cases}
0,&\text{with probability }1-q,\\
g_{ij}/\sqrt q,&\text{with probability }q,
\end{cases}
\qquad g_{ij}\sim\mathcal N(0,1),\qquad q\in(0,1].
\]
Assume that all random choices are independent, and define
$S=PHD/\sqrt{k}$. Then
\[
\EE[S^\top S]=I_n,
\qquad
\EE\|S^\top Sx\|^2
=\left(1+\frac{n-2+3/q}{k}\right)\|x\|^2
\quad\text{for every fixed }x\in\RR^n.
\]
Moreover, for every $\delta\in(0,1)$, with probability at least $1-\delta$,
\[
\|S\|\le
1+\frac Cq\left(\sqrt{\frac nk}+\sqrt{\frac{\log(2/\delta)}{k}}\right),
\]
where $C>0$ is universal.
\end{lemma}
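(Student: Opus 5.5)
The proof reduces to the same computations as Lemma~\ref{lem:app_iid_core} and Lemma~\ref{lem:app_subgaussian_operator_norm}, after conditioning on the orthogonal factor $U=HD$. The factor $HD$ is orthogonal, since $H$ is normalized Walsh--Hadamard and $D$ is a diagonal sign matrix, and it is independent of $P$. Write $P=A$, so that $S=AU/\sqrt{k}$ with $U=HD$. The entries $A_{ij}=b_{ij}g_{ij}/\sqrt q$, with $b_{ij}\sim\mathrm{Bernoulli}(q)$ independent of $g_{ij}$, are i.i.d. and satisfy the following:
\begin{itemize}
\item $\EE A_{11}=0$;
\item $\EE A_{11}^2=q\cdot 1/q=1$;
\item $\EE A_{11}^4=q\cdot 3/q^2=3/q$.
\end{itemize}
Lemma~\ref{lem:app_iid_core} therefore applies directly with $\mu_4=3/q$. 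It gives $\EE[S^\top S]=I_n$ and
\[
\EE\|S^\top Sx\|^2=\left(1+\frac{n-2+3/q}{k}\right)\|x\|^2 .
\]
This settles the two moment identities.

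For the operator norm, orthogonal invariance gives $\|S\|=\|AU\|/\sqrt k=\|A\|/\sqrt k$ almost surely. It therefore suffices to bound $\|A\|$, where $A\in\RR^{k\times n}$ has independent rows $A_1,\dots,A_k$ with i.i.d. centered, unit-variance coordinates. These rows are mean-zero and isotropic. To apply Lemma~\ref{lem:app_subgaussian_operator_norm} with $m=k$, I need a bound $\|A_i\|_{\psi_2}\le K$. Since the coordinates are independent and centered, the subgaussian norm of $\langle A_i,u\rangle$ is at most a universal constant times $\max_j\|A_{ij}\|_{\psi_2}$. So the task is to bound the scalar $\|b g/\sqrt q\|_{\psi_2}$.

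Compute $\EE e^{b^2g^2/(qs^2)}=1-q+q(1-2/(qs^2))^{-1/2}$, valid for $qs^2>2$. Requiring this to be at most $2$ amounts to $(1-2/(qs^2))^{-1/2}\le 1+1/q$, which holds for $s^2\asymp 1/q$. Hence $\|A_{ij}\|_{\psi_2}\lesssim q^{-1/2}$, and so $K^2\lesssim 1/q$. This is exactly the source of the $C/q$ factor in the statement.

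Lemma~\ref{lem:app_subgaussian_operator_norm} then gives, with probability at least $1-2e^{-t^2}$,
\[
\|A\|\le\sqrt k+\frac{C}{q}\left(\sqrt n+t\right).
\]
Take $t=\sqrt{\log(2/\delta)}$ and divide by $\sqrt k$. This yields the claimed bound after enlarging the universal constant $C$.

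The main obstacle is controlling the $\psi_2$ norm of a row in terms of its coordinates. One should not use the row norm; the relevant fact is that a sum $\sum_j u_jA_{ij}$ of independent centered subgaussians with $\|u\|=1$ has $\psi_2$ norm at most $C\max_j\|A_{ij}\|_{\psi_2}$, via Hoeffding-type subgaussian additivity. I would cite this standard fact from \citet{Vershynin_2026}. The scalar MGF computation above fixes the $q^{-1/2}$ dependence.
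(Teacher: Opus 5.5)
Your proposal is correct and follows essentially the paper's proof: the moment identities come from Lemma~\ref{lem:app_iid_core} with $U=HD$ and $\mu_4=3/q$, and the operator-norm bound comes from Lemma~\ref{lem:app_subgaussian_operator_norm} with row subgaussian norm $O(q^{-1/2})$, combined with $\|S\|=\|P\|/\sqrt{k}$. The only cosmetic difference is in the row bound: the paper tensorizes the MGF estimate $\EE e^{\lambda P_{11}}\le e^{\lambda^2/(2q)}$ over coordinates, whereas you compute the scalar $\psi_2$ norm of one entry directly and then invoke Hoeffding-type additivity.
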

\begin{proof}
The entries of $P$ satisfy
\[
\EE P_{11}=0,\qquad \EE P_{11}^2=1,\qquad \EE P_{11}^4=3/q.
\]
The moment identities follow from Lemma~\ref{lem:app_iid_core} with
$A=P$ and the independent orthogonal matrix $U=HD$.

To bound the operator norm, observe that
\[
\EE e^{\lambda P_{11}}
=1-q+q e^{\lambda^2/(2q)}\le e^{\lambda^2/(2q)}.
\]
By independence, for a row $p$ of $P$ and a unit vector $u$,
\[
\EE e^{\lambda\langle p,u\rangle}
\le\prod_{j=1}^n e^{\lambda^2u_j^2/(2q)}
=e^{\lambda^2/(2q)}.
\]
Thus the rows are mean-zero and isotropic, with subgaussian norms at most
$C/\sqrt q$. Lemma~\ref{lem:app_subgaussian_operator_norm}, with
$t=\sqrt{\log(2/\delta)}$, gives
\[
\|P\|\le\sqrt k+\frac Cq\left(\sqrt n+\sqrt{\log(2/\delta)}\right)
\]
with the claimed probability, after changing the universal constant.
Finally, orthogonality of $HD$ gives $\|S\|=\|P\|/\sqrt{k}$.
\end{proof}

\begin{lemma}[CountSketch]\label{lem:app_countsketch}
Let $h(1),\dots,h(n)$ be independent and uniform on $[k]$, and let
$\sigma(1),\dots,\sigma(n)$ be independent Rademacher signs, independent
of $h$. Define $S\in\RR^{k\times n}$ by
\[
S_{rj}=\sigma(j)\mathbf1_{\{h(j)=r\}}.
\]
Then
\[
\EE[S^\top S]=I_n,
\qquad
\EE\|S^\top Sx\|^2
=\left(1+\frac{n-1}{k}\right)\|x\|^2
\quad\text{for every fixed }x\in\RR^n.
\]
If $N_r=\#\{j\in[n]:h(j)=r\}$ is the number of coordinates assigned to
row $r$, then
\[
SS^\top=\operatorname{diag}(N_1,\dots,N_k),
\qquad
\|S\|^2=\max_{1\le r\le k}N_r.
\]
Consequently, for every $\delta\in(0,1)$, with probability at least
$1-\delta$,
\[
\|S\|^2\le
\frac nk+\sqrt{\frac{2n}{k}\log\frac{k}{\delta}}
+\frac23\log\frac{k}{\delta}.
\]
\end{lemma}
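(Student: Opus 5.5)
The proof computes $S^\top S$ explicitly from the definition and then reduces every claim to the bucket counts $N_r$.

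\textbf{Step 1: structure of $S^\top S$.} For $j,l\in[n]$,
\[
[S^\top S]_{jl}=\sum_r S_{rj}S_{rl}=\sigma(j)\sigma(l)\mathbf1_{\{h(j)=h(l)\}}.
\]
The diagonal entries equal $1$. For $j\neq l$ the signs are independent and centered, so the off-diagonal entries have mean zero. This gives $\EE[S^\top S]=I_n$.

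\textbf{Step 2: second moment.} Write $(S^\top Sx)_j=x_j+\sigma(j)\sum_{l\ne j}\sigma(l)\mathbf1_{\{h(l)=h(j)\}}x_l$. Expanding the square:
\begin{itemize}
\item The cross term has zero expectation, because each $\sigma(l)$ with $l\ne j$ appears exactly once.
\item In the square of the sum, only the diagonal pairs survive, giving $\sum_{l\ne j}x_l^2\,\PP[h(l)=h(j)]=\frac1k\sum_{l\ne j}x_l^2$.
\end{itemize}
Summing over $j$ yields $\|x\|^2+\frac{n-1}{k}\|x\|^2$.

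\textbf{Step 3: $SS^\top$ and the operator norm.} Compute
\[
[SS^\top]_{rs}=\sum_j\sigma(j)^2\mathbf1_{\{h(j)=r\}}\mathbf1_{\{h(j)=s\}}=\delta_{rs}N_r.
\]
Hence $\|S\|^2=\|SS^\top\|=\max_r N_r$.

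\textbf{Step 4: tail bound.} Each $N_r=\sum_j\mathbf1_{\{h(j)=r\}}$ is $\mathrm{Bin}(n,1/k)$. Its centered summands are bounded by $1$ and the total variance is $n\frac1k(1-\frac1k)\le n/k$. Bernstein's inequality gives
\[
\PP\left[N_r-\tfrac nk>t\right]\le\exp\left(-\frac{t^2/2}{n/k+t/3}\right).
\]
Setting the right side to $\delta/k$ and solving the quadratic $t^2/2=L(n/k+t/3)$ with $L=\log(k/\delta)$ gives
\[
t=\frac L3+\sqrt{\frac{L^2}{9}+\frac{2nL}{k}}\le\sqrt{\frac{2nL}{k}}+\frac{2L}{3}.
\]
A union bound over the $k$ rows finishes the proof.

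The only delicate point is bookkeeping the Bernstein constants so they match the stated form exactly. The moment computations are routine.
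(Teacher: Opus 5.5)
Your proposal is correct and follows essentially the same route as the paper: you use the explicit entries of $S^\top S$ with centered independent signs for unbiasedness and the second moment, the disjoint row supports to get $SS^\top=\operatorname{diag}(N_1,\dots,N_k)$, and Bernstein's inequality plus a union bound over the $k$ binomial counts. The differences are cosmetic. The paper computes $\EE\|S^\top Sx-x\|^2$ and then invokes unbiasedness, and it quotes Bernstein directly as $\PP[N_r>n/k+\sqrt{2nu/k}+2u/3]\le e^{-u}$, whereas you expand the square directly and solve the quadratic yourself.
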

\begin{proof}
Every column contains one nonzero entry of magnitude one. Hence
$(S^\top S)_{jj}=1$, while for $i\ne j$,
\[
(S^\top S)_{ij}
=\sigma(i)\sigma(j)\mathbf1_{\{h(i)=h(j)\}}.
\]
Independent centered signs make the off-diagonal expectations zero,
proving unbiasedness. To compute the second moment, write
\[
(S^\top Sx-x)_j
=\sigma(j)\sum_{i\ne j}\sigma(i)\mathbf1_{\{h(i)=h(j)\}}x_i.
\]
After squaring, the mixed terms vanish in expectation. Since each pair of
coordinates collides with probability $1/k$,
\[
\EE\|S^\top Sx-x\|^2
=\frac1k\sum_{j=1}^n\sum_{i\ne j}x_i^2
=\frac{n-1}{k}\|x\|^2.
\]
Unbiasedness now gives the claimed second moment.

Distinct rows of $S$ have disjoint supports, and the squared norm of row
$r$ is $N_r$. This proves the formula for $SS^\top$ and its operator norm.
For each $r$, the count $N_r$ is binomial with mean $n/k$. Bernstein's
inequality gives, for every $u\ge0$,
\[
\PP\left[
N_r>\frac nk+\sqrt{\frac{2nu}{k}}+\frac{2u}{3}
\right]\le e^{-u}.
\]
Take $u=\log(k/\delta)$ and apply the union bound over the $k$ rows.
\end{proof}

\subsection{Per-layer sketching}

We now combine the one-layer guaranties using the block-diagonal structure
of the sketch. Write
\[
x=(x_1,\dots,x_W),\qquad
x_w\in\RR^{n_w},\qquad
n=\sum_{w=1}^W n_w,\qquad
S=\operatorname{diag}(S_1,\dots,S_W),
\]
where $S_w\in\RR^{k_w\times n_w}$. The key identities are
\[
\|x\|^2=\sum_{w=1}^W\|x_w\|^2,
\qquad \|Sx\|^2=\sum_{w=1}^W\|S_wx_w\|^2.
\]
\[
S^\top S=\operatorname{diag}(S_1^\top S_1,\dots,S_W^\top S_W),
\qquad
\|S^\top Sx\|^2=\sum_{w=1}^W\|S_w^\top S_wx_w\|^2.
\]
As in the main text, for $x\ne0$ we use the energy fractions
\[
\alpha_w(x)=\frac{\|x_w\|^2}{\|x\|^2},
\qquad
\sum_{w=1}^W\alpha_w(x)=1.
\]
Thus, squared-norm bounds combine as energy-weighted averages, whereas
operator-norm bounds combine by taking the largest layer bound.

\begin{theorem}[Per-layer JL properties]\label{thm:app_per_layer_jl}
Suppose that, for each $w$, there are $\e_w\geq0$ and
$\delta_w\in[0,1]$ such that, for every fixed $u\in\RR^{n_w}$,
\[
    \PP\left[
        \left|\|S_wu\|^2-\|u\|^2\right|>\e_w\|u\|^2
    \right]\leq\delta_w.
\]
Then, for every fixed
$x\ne0$,
\[
\PP\left[
\left|\|Sx\|^2-\|x\|^2\right|
>
\left(\sum_{w=1}^W\alpha_w(x)\e_w\right)\|x\|^2
\right]
\le\sum_{w=1}^W\delta_w.
\]
In particular, if $\max_w\e_w\le\e$, then $S$ satisfies the
$(\e,\sum_w\delta_w)$ distributional JL property. If the blocks are
independent, both failure bounds can be replaced by
$1-\prod_{w=1}^W(1-\delta_w)$.
\end{theorem}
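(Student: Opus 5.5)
The plan is to reduce the statement to a single deterministic implication on the intersection of the layerwise good events, and then apply the union bound lemma of Appendix~\ref{appendix:basic}. Fix $x\neq0$ and, for each $w$, define the event
\[
E_w=\left\{\left|\|S_wx_w\|^2-\|x_w\|^2\right|\le\e_w\|x_w\|^2\right\}.
\]
Since $x_w\in\RR^{n_w}$ is a fixed vector, the hypothesis gives $\PP[\overline{E_w}]\le\delta_w$. If $x_w=0$, then $E_w$ holds trivially, because $S_wx_w=0$. No independence between blocks is needed at this stage, because all $S_w$ live on a common probability space.

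On $\bigcap_w E_w$, use the block identities $\|Sx\|^2=\sum_w\|S_wx_w\|^2$ and $\|x\|^2=\sum_w\|x_w\|^2$ together with the triangle inequality:
\[
\left|\|Sx\|^2-\|x\|^2\right|\le\sum_{w=1}^W\left|\|S_wx_w\|^2-\|x_w\|^2\right|\le\sum_{w=1}^W\e_w\|x_w\|^2=\left(\sum_{w=1}^W\alpha_w(x)\e_w\right)\|x\|^2 .
\]
Hence the bad event in the theorem is contained in $\bigcup_w\overline{E_w}$, and its probability is at most $\sum_w\delta_w$ by the union bound lemma.

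For the ``in particular'' clause, note that $\sum_w\alpha_w(x)=1$, so $\sum_w\alpha_w(x)\e_w\le\max_w\e_w\le\e$. The failure event with threshold $\e$ is then contained in the one just bounded. This gives the distributional JL property with failure probability $\sum_w\delta_w$ for every fixed $x$; the case $x=0$ is trivial.

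For independent blocks, the events $E_w$ depend on distinct independent $S_w$, so $\PP[\bigcap_wE_w]=\prod_w\PP[E_w]\ge\prod_w(1-\delta_w)$. The failure probability is therefore at most $1-\prod_w(1-\delta_w)$.

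No step is a real obstacle. The only point needing care is that each $E_w$ is defined through the fixed vector $x_w$, so the per-layer hypothesis applies directly.
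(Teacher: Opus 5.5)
Your proposal is correct and follows essentially the same route as the paper: intersect the per-layer success events for the fixed blocks $x_w$, and on that event use the block identities and the triangle inequality to get the energy-weighted distortion. Then bound the weighted average by $\max_w\e_w$, and replace the union bound by the product of success probabilities when the blocks are independent.
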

\begin{proof}
Fix $x\ne0$. With probability at least $1-\sum_w\delta_w$, every layer
satisfies
$|\|S_wx_w\|^2-\|x_w\|^2|\le\e_w\|x_w\|^2$.
On this event,
\[
\left|\|Sx\|^2-\|x\|^2\right|
\le
\sum_{w=1}^W\left|\|S_wx_w\|^2-\|x_w\|^2\right|
\le
\sum_{w=1}^W\e_w\|x_w\|^2
=
\left(\sum_{w=1}^W\alpha_w(x)\e_w\right)\|x\|^2.
\]
The weighted distortion is at most $\max_w\e_w$, since the weights sum
to one. For independent blocks, the layer success events are independent,
so their intersection has probability at least
$\prod_w(1-\delta_w)$.
\end{proof}

The same block identities give the properties of the lifted map
$S^\top S$ needed in the optimization analysis. None of the next three
results requires independence between layers.

\begin{lemma}[Per-layer unbiasedness]\label{lem:app_per_layer_unbiased}
If $\EE[S_w^\top S_w]=I_{n_w}$ for every layer, then
\[
\EE[S^\top S]=I_n,
\qquad
\EE[S^\top Sx]=x\quad\text{for all }x\in\RR^n.
\]
\end{lemma}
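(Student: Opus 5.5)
The plan is to exploit the block-diagonal structure of $S$ and reduce both claims to the layerwise hypothesis $\EE[S_w^\top S_w]=I_{n_w}$ via linearity of expectation. No independence between the layers $S_1,\dots,S_W$ is needed, because the expectation of a block matrix is taken entrywise, and each entry depends on a single block.

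First, I compute $S^\top S$ blockwise. Since $S=\operatorname{diag}(S_1,\dots,S_W)$, the product $S^\top S$ is block diagonal with diagonal blocks $S_w^\top S_w\in\RR^{n_w\times n_w}$. The off-diagonal blocks are identically zero, because $S$ has zero off-diagonal blocks; this is the identity already recorded before Theorem~\ref{thm:app_per_layer_jl}.

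Next, I take expectations entrywise. Every entry of an off-diagonal block is the constant $0$, so its expectation is $0$. Every entry of the $w$-th diagonal block is an entry of $S_w^\top S_w$, and by the hypothesis its expectation is the corresponding entry of $I_{n_w}$. Integrability is implicit in that hypothesis. Hence $\EE[S^\top S]=\operatorname{diag}(I_{n_1},\dots,I_{n_W})=I_n$.

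Finally, for a fixed $x\in\RR^n$, linearity of expectation gives $\EE[S^\top Sx]=\EE[S^\top S]\,x=x$. Equivalently, one can argue layerwise: $(S^\top Sx)_w=S_w^\top S_wx_w$, whose expectation is $x_w$.

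None of the steps is a genuine obstacle. The only point needing care is to note explicitly that the argument uses only marginal properties of each $S_w$, so arbitrary dependence across layers is allowed.
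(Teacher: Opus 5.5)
Your proposal is correct and matches the paper's proof, which likewise takes expectations in the block-diagonal identity $S^\top S=\operatorname{diag}(S_1^\top S_1,\dots,S_W^\top S_W)$ and applies linearity for fixed $x$. Your observation that only the marginal law of each $S_w$ is used also agrees with the paper's remark that none of these per-layer results requires independence between layers.
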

\begin{proof}
Take expectations in the block-diagonal identity for $S^\top S$.
\end{proof}

\begin{lemma}[Per-layer second moment]\label{lem:app_per_layer_second_moment}
Suppose, for each layer, there is $\omega_w\ge0$ such that
\[
\EE\|S_w^\top S_wu\|^2\le\omega_w\|u\|^2
\qquad\text{for every }u\in\RR^{n_w}.
\]
Then, for every $x\ne0$,
\[
\EE\|S^\top Sx\|^2
\le
\left(\sum_{w=1}^W\alpha_w(x)\omega_w\right)\|x\|^2
\le
\left(\max_w\omega_w\right)\|x\|^2.
\]
\end{lemma}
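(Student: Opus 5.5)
The plan is to reduce everything to the block-diagonal identity already recorded above,
\[
\|S^\top Sx\|^2=\sum_{w=1}^W\|S_w^\top S_wx_w\|^2,
\]
and then use linearity of expectation. The expectation of a sum of layer terms is the sum of their expectations, so no independence between the blocks is needed; this is consistent with the remark preceding the lemma.

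First, I fix $x\ne0$ and take expectations in the identity:
\[
\EE\|S^\top Sx\|^2=\sum_{w=1}^W\EE\|S_w^\top S_wx_w\|^2 .
\]
Each $x_w$ is a fixed vector in $\RR^{n_w}$, so the per-layer hypothesis applies with $u=x_w$ and gives $\EE\|S_w^\top S_wx_w\|^2\le\omega_w\|x_w\|^2$. This also covers layers with $x_w=0$, where both sides vanish.

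Second, I sum over layers and write $\|x_w\|^2=\alpha_w(x)\|x\|^2$. This yields the first inequality,
\[
\EE\|S^\top Sx\|^2\le\Big(\sum_{w}\alpha_w(x)\omega_w\Big)\|x\|^2 .
\]

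Third, the weights satisfy $\alpha_w(x)\ge0$ and $\sum_w\alpha_w(x)=1$. The weighted average of the $\omega_w$ is therefore at most $\max_w\omega_w$, which gives the second inequality.

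There is no substantive obstacle. The only point to check is that the hypothesis is applied to deterministic vectors $x_w$, so it holds layer by layer without any joint distributional assumption on $(S_1,\dots,S_W)$.
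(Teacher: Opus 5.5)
Your proposal is correct and follows essentially the same route as the paper's proof. Both expand $\EE\|S^\top Sx\|^2$ as a sum of layer terms by linearity of expectation, apply the per-layer hypothesis to the fixed blocks $x_w$, and conclude using $\|x_w\|^2=\alpha_w(x)\|x\|^2$ together with $\sum_w\alpha_w(x)=1$.
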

\begin{proof}
Using the sum of squared block norms and linearity of expectation gives
\[
\EE\|S^\top Sx\|^2
=\sum_{w=1}^W\EE\|S_w^\top S_wx_w\|^2
\le\sum_{w=1}^W\omega_w\|x_w\|^2.
\]
Divide by $\|x\|^2$ and use $\sum_w\alpha_w(x)=1$.
\end{proof}

\begin{lemma}[Per-layer high-probability norm bound]\label{lem:app_per_layer_norm}
Suppose $\PP[\|S_w\|^2\le c_w]\ge1-\delta_w$ for every layer.
Then, with probability at least $1-\sum_w\delta_w$,
\[
\|S\|^2=\|S^\top S\|\le\max_w c_w,
\]
and, on the same event, for every $x\ne0$,
\[
\begin{aligned}
\|Sx\|^2
&\le\left(\sum_{w=1}^W\alpha_w(x)c_w\right)\|x\|^2,\\
\|S^\top Sx\|^2
&\le\left(\sum_{w=1}^W\alpha_w(x)c_w^2\right)\|x\|^2.
\end{aligned}
\]
\end{lemma}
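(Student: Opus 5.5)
We prove the statement of Lemma~\ref{lem:app_per_layer_norm} by intersecting the layerwise good events. We then read off the three bounds from the block-diagonal identities collected at the start of this subsection. No independence between layers is needed, since only a union bound is used.

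\textbf{Step 1 (the good event).} Let $E_w=\{\|S_w\|^2\le c_w\}$. By the union-bound lemma of Appendix~\ref{appendix:basic}, the intersection $E=\bigcap_w E_w$ has probability at least $1-\sum_w\delta_w$. All remaining claims are deterministic consequences of $E$. Because $E$ is defined through operator norms only, they hold simultaneously for every $x\neq0$; this is what lets the statement say ``on the same event, for every $x$''.

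\textbf{Step 2 (operator norm).} Since $S^\top S=\operatorname{diag}(S_1^\top S_1,\dots,S_W^\top S_W)$ is block diagonal, its operator norm is the maximum of the blockwise operator norms. Hence
\[
\|S\|^2=\|S^\top S\|=\max_w\|S_w^\top S_w\|=\max_w\|S_w\|^2\le\max_w c_w
\]
on $E$. Alternatively, we can bound $\|Sx\|^2=\sum_w\|S_wx_w\|^2\le\max_w c_w\sum_w\|x_w\|^2$ and take the supremum over unit $x$.

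\textbf{Step 3 (vector bounds).} On $E$, each layer satisfies $\|S_wx_w\|^2\le\|S_w\|^2\|x_w\|^2\le c_w\|x_w\|^2$. It also satisfies $\|S_w^\top S_wx_w\|^2\le\|S_w\|^4\|x_w\|^2\le c_w^2\|x_w\|^2$. Summing over $w$ with the identities $\|Sx\|^2=\sum_w\|S_wx_w\|^2$ and $\|S^\top Sx\|^2=\sum_w\|S_w^\top S_wx_w\|^2$ gives the two sums. Writing $\|x_w\|^2=\alpha_w(x)\|x\|^2$ then yields the stated energy-weighted forms.

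There is no real obstacle. The only point requiring care is that the failure probability is charged once per layer via the operator-norm events, not once per vector $x$. This is why the bounds are uniform in $x$.
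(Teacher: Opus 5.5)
Your proof is correct and follows the same route as the paper. The paper also takes a union bound over the layerwise events $\|S_w\|^2\le c_w$, uses block diagonality to get $\|S^\top S\|=\max_w\|S_w\|^2$, sums the per-layer bounds $c_w\|x_w\|^2$ and $c_w^2\|x_w\|^2$, and substitutes $\|x_w\|^2=\alpha_w(x)\|x\|^2$.
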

\begin{proof}
A union bound ensures that all inequalities $\|S_w\|^2\le c_w$ hold
on an event of the stated probability. On this event, block diagonality
gives
\[
\|S\|^2=\|S^\top S\|=\max_w\|S_w\|^2\le\max_w c_w.
\]
For any $x$, summing the layer bounds gives
\[
\|Sx\|^2\le\sum_w c_w\|x_w\|^2,
\qquad
\|S^\top Sx\|^2\le\sum_w\|S_w\|^4\|x_w\|^2
\le\sum_w c_w^2\|x_w\|^2.
\]
Substituting $\|x_w\|^2=\alpha_w(x)\|x\|^2$ proves the claim.
\end{proof}

\paragraph{Allocating the sketch dimensions.}
Theorem~\ref{thm:app_per_layer_jl} shows that a layer containing little
of a vector's energy can tolerate a larger relative distortion. For
Gaussian and Rademacher sketches, this allows a smaller sketch dimension,
since $k_w$ scales as $\e_w^{-2}$. The following allocation uses this
weighted distortion budget for a fixed vector.

\begin{corollary}[Energy-aware allocation]\label{cor:app_energy_aware}
Fix $x\ne0$ and $\e,\delta\in(0,1)$. For every layer with
$\alpha_w(x)>0$, choose a Gaussian or Rademacher sketch with
\[
\e_w=
\frac{\e\,\alpha_w(x)^{-1/3}}
{\sum_{j=1}^W\alpha_j(x)^{2/3}},
\qquad
\delta_w=\frac{\delta}{W},
\qquad
k_w=
\left\lceil C\frac{\log(2W/\delta)}{\e_w^2}\right\rceil,
\]
where $C>0$ is a sufficiently large universal constant. Assume that
$\e_w<1$ and $k_w\le n_w$ for all such layers. Layers with $x_w=0$
are omitted, with $k_w=0$.

Then, for this fixed vector,
\[
\PP\left[\left|\|Sx\|^2-\|x\|^2\right|\le\e\|x\|^2\right]
\ge1-\delta,
\]
and the total number of transmitted coordinates satisfies
\[
\sum_{w=1}^W k_w
=\Theta\left(
\frac{\log(2W/\delta)}{\e^2}
\left(\sum_{w=1}^W\alpha_w(x)^{2/3}\right)^3
\right).
\]
\end{corollary}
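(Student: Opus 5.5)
The plan is to combine three ingredients: the one-layer Gaussian/Rademacher JL guarantee, the weighted aggregation in Theorem~\ref{thm:app_per_layer_jl}, and an explicit evaluation of the resulting sum $\sum_w k_w$. Throughout, write $\alpha_w=\alpha_w(x)$ and $A\eqdef\sum_{j}\alpha_j^{2/3}$, where the sum runs over active layers only; inactive layers contribute nothing to either side.

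\textbf{Step 1: per-layer guarantee.} For each active layer, $\e_w<1$ by assumption and $k_w\ge C\e_w^{-2}\log(2W/\delta)=C\e_w^{-2}\log(2/\delta_w)$. The JL guarantee for Gaussian and Rademacher matrices therefore gives, for every fixed $u\in\RR^{n_w}$,
\[
\PP\left[\big|\|S_wu\|^2-\|u\|^2\big|>\e_w\|u\|^2\right]\le\delta_w=\delta/W .
\]
The assumption $k_w\le n_w$ plays no role in the probability bound. It only ensures that sketching a layer actually compresses it.

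\textbf{Step 2: aggregation.} Apply Theorem~\ref{thm:app_per_layer_jl} to the active layers. An omitted layer has $x_w=0$, so it contributes zero to both $\|Sx\|^2$ and $\|x\|^2$, and it can be dropped. With probability at least $1-\sum_w\delta_w\ge1-\delta$, the distortion is at most $\sum_w\alpha_w\e_w\,\|x\|^2$. Substituting the chosen $\e_w$,
\[
\sum_w\alpha_w\e_w=\frac{\e\sum_w\alpha_w^{2/3}}{A}=\e .
\]
This is exactly why the exponent $-1/3$ was chosen.

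\textbf{Step 3: counting coordinates.} Since $\e_w^{-2}=\alpha_w^{2/3}A^2/\e^2$, we get
\[
\sum_w C\e_w^{-2}\log(2W/\delta)=C\,\frac{\log(2W/\delta)}{\e^2}\,A^3 .
\]
The ceiling adds at most one per active layer, so
\[
C\frac{\log(2W/\delta)}{\e^2}A^3\le\sum_w k_w\le C\frac{\log(2W/\delta)}{\e^2}A^3+W_{\rm act}.
\]

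\textbf{Main obstacle.} The only delicate point is absorbing the additive $W_{\rm act}$ into the $\Theta$ bound. Because $\e_w<1$, each term satisfies $C\e_w^{-2}\log(2W/\delta)\ge C\log 2$. Each ceiling therefore at most doubles its term once $C\log2\ge1$, and the upper bound is at most twice the main term. This gives the stated $\Theta$ with constants depending only on $C$.
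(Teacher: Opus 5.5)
Your proposal is correct and follows the paper's proof step for step: a union bound over the active layers via Theorem~\ref{thm:app_per_layer_jl} with $\delta_w=\delta/W$, the identity $\sum_w\alpha_w(x)\e_w=\e$, the computation $\sum_w\e_w^{-2}=\e^{-2}\bigl(\sum_w\alpha_w(x)^{2/3}\bigr)^3$, and absorbing the ceilings using $\e_w<1$ and $\log(2W/\delta)\ge\log 2$. Your explicit factor-of-two bound, valid once $C\log2\ge1$, is a concrete form of the paper's remark that rounding changes each $k_w$ by at most a universal constant factor.
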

\begin{proof}
The one-layer JL guarantees and a union bound give simultaneous success
on all nonzero layers with probability at least $1-\delta$. Zero layers
contribute no error for the fixed vector $x$. The total relative
distortion on this event is at most
\[
\sum_{w:\alpha_w(x)>0}\alpha_w(x)\e_w
=
\e\,
\frac{\sum_{w:\alpha_w(x)>0}\alpha_w(x)^{2/3}}
{\sum_{j=1}^W\alpha_j(x)^{2/3}}
=\e,
\]
as in Theorem~\ref{thm:app_per_layer_jl}.

To count coordinates, substitute the chosen distortions:
\[
\sum_{w:\alpha_w(x)>0}\frac{1}{\e_w^2}
=
\frac{\left(\sum_{j=1}^W\alpha_j(x)^{2/3}\right)^2}{\e^2}
\sum_{w:\alpha_w(x)>0}\alpha_w(x)^{2/3}
=
\frac{1}{\e^2}
\left(\sum_{w=1}^W\alpha_w(x)^{2/3}\right)^3.
\]
Since $0<\e_w<1$ and $\log(2W/\delta)\ge\log2$, rounding each
nonzero layer's dimension changes it by at most a universal constant
factor. Multiplying the last identity by $\log(2W/\delta)$ therefore
gives the stated total dimension.
\end{proof}

The allocation depends on the energy profile of $x$: its guarantee is
for that fixed vector, with these dimensions chosen before the sketches
are drawn. It applies only in the stated feasible regime
$\e_w<1$ and $k_w\le n_w$.

\subsection{Kronecker sketches: compressing one tensor mode at a time}

Per-layer sketches act on separate blocks. A Kronecker sketch instead
applies several smaller matrices to different modes of the same tensor.
We use the notation from the main text: for
$X\in\RR^{n_1\times\cdots\times n_d}$ and
$S_j\in\RR^{k_j\times n_j}$, define
\[
    \mathcal S(X)=X\times_1S_1\cdots\times_dS_d,
    \qquad S=S_d\otimes\cdots\otimes S_1.
\]
Here $X\times_jS_j$ means multiplication along the $j$th mode. More
explicitly, if $X_{(j)}$ is the matrix formed by arranging the mode-$j$
fibers of $X$ as columns, then
\[
    (X\times_jS_j)_{(j)}=S_jX_{(j)},
    \qquad
    \operatorname{vec}(\mathcal S(X))=S\operatorname{vec}(X).
\]
Both unfolding and vectorization preserve the Frobenius norm. Thus a
single mode multiplication can be analyzed using a matrix Frobenius-norm
bound. We prove this composition step once, then apply it to dense
sketches and to general JL factors.

\begin{theorem}[Combining independent Frobenius-norm guarantees]
\label{thm:app_abstract_frobenius_jl}
Let $X$ be a fixed tensor and let $S_1,\ldots,S_d$ be independent random
matrices. Fix $\e\in(0,1)$ and $\delta_1,\ldots,\delta_d\in[0,1]$.
Suppose that, for each $j$ and every fixed matrix $V$ with $n_j$ rows,
\[
    \PP\left[
        \left|\|S_jV\|_F^2-\|V\|_F^2\right|
        >\frac{\e}{4d}\|V\|_F^2
    \right]\leq\delta_j.
\]
Then
\[
    \PP\left[
        \left|\|\mathcal S(X)\|_F^2-\|X\|_F^2\right|
        >\e\|X\|_F^2
    \right]\leq\sum_{j=1}^d\delta_j.
\]
\begin{proof}
Let $X^{(0)}=X$ and $X^{(j)}=X^{(j-1)}\times_jS_j$ denote the tensor
after the first $j$ multiplications. Conditional on $S_1,\ldots,S_{j-1}$,
the tensor $X^{(j-1)}$ is fixed and $S_j$ remains independent. The
assumption applied to $(X^{(j-1)})_{(j)}$ therefore gives
\[
\begin{split}
    \PP\Bigg[
        \left|\|X^{(j)}\|_F^2-\|X^{(j-1)}\|_F^2\right|
        >\frac{\e}{4d}\|X^{(j-1)}\|_F^2
        \;\Bigm|\;S_1,\ldots,S_{j-1}
    \Bigg]\leq\delta_j.
\end{split}
\]
Taking expectations and a union bound, all $d$ steps satisfy their bounds
with probability at least $1-\sum_j\delta_j$. On this event,
\[
    \left(1-\frac{\e}{4d}\right)^d\|X\|_F^2
    \leq\|\mathcal S(X)\|_F^2
    \leq\left(1+\frac{\e}{4d}\right)^d\|X\|_F^2.
\]
For $0<\e<1$, Bernoulli's inequality and $1+u\leq e^u$ give
\[
    \left(1-\frac{\e}{4d}\right)^d\geq1-\frac{\e}{4}\geq1-\e,
    \qquad
    \left(1+\frac{\e}{4d}\right)^d\leq e^{\e/4}\leq1+\e.
\]
This proves the claim. The proof also shows that it is enough to bound
the conditional failure probability at each step; a simultaneous guarantee
over all possible intermediate tensors is not required.
\end{proof}
\end{theorem}

\begin{theorem}[Gaussian and Rademacher Kronecker sketches]
\label{thm:app_kronecker_concrete}
Let the factors $S_j\in\RR^{k_j\times n_j}$ be independent Gaussian or
Rademacher JL matrices. There is a universal constant $c>0$ such that,
for every fixed tensor $X$ and every $\e\in(0,1)$,
\[
    \PP\left[
        \left|\|\mathcal S(X)\|_F^2-\|X\|_F^2\right|
        >\e\|X\|_F^2
    \right]
    \leq2\sum_{j=1}^d e^{-c\e^2k_j/d^2}
    \leq2d\,e^{-c\e^2\min_j k_j/d^2}.
\]
Consequently, failure probability at most $\delta$ is ensured by
\[
    \min_{1\leq j\leq d}k_j
    \geq C\frac{d^2}{\e^2}\log\frac{2d}{\delta},
\]
where $C>0$ is a sufficiently large universal constant. The same bounds
hold for $\|Sx\|^2$ with $x=\operatorname{vec}(X)$.
\begin{proof}
Lemma~\ref{lem:app_one_factor_frobenius}, with distortion $\e/(4d)$,
bounds the failure probability at step $j$ by
$2e^{-c_1k_j\e^2/(16d^2)}$. Apply the conditioning argument of
Theorem~\ref{thm:app_abstract_frobenius_jl} and set $c=c_1/16$.
The dimension bound follows by requiring each of the $d$ terms to be at
most $\delta/d$. Vectorization gives the final assertion.
\end{proof}
\end{theorem}

\paragraph{When only a fixed-vector JL guarantee is available.}
The preceding dense-sketch result uses the Frobenius-norm estimate of
Lemma~\ref{lem:app_one_factor_frobenius}, whose failure probability is
independent of the matrix rank. For another construction, we can obtain
a Frobenius-norm guarantee from fixed-vector bounds, at the cost of a
rank factor.

\begin{lemma}[From fixed vectors to a fixed matrix]
\label{lem:app_pointwise_to_frobenius}
Fix $\e\in(0,1)$ and $\delta\in[0,1]$. Let
$S\in\RR^{k\times n}$ be a random matrix such that, for every fixed
vector $u\in\RR^n$,
\[
    \PP\left[
        \left|\|Su\|^2-\|u\|^2\right|>\e\|u\|^2
    \right]\leq\delta.
\]
Then, for every fixed matrix $V$ with $n$ rows,
\[
    \PP\left[
        \left|\|SV\|_F^2-\|V\|_F^2\right|>\e\|V\|_F^2
    \right]\leq\operatorname{rank}(V)\,\delta.
\]
\begin{proof}
The assertion is immediate for $V=0$. Otherwise, let $u_i$ be its left
singular vectors and $\sigma_i>0$ its nonzero singular values. Then
\[
    \|SV\|_F^2=\sum_i\sigma_i^2\|Su_i\|^2,
    \qquad \|V\|_F^2=\sum_i\sigma_i^2.
\]
There are $\operatorname{rank}(V)$ unit vectors $u_i$. A union bound
ensures that all their squared norms are preserved within $\e$ with
failure probability at most $\operatorname{rank}(V)\delta$. Multiplying
their bounds by $\sigma_i^2$ and summing proves the claim.
\end{proof}
\end{lemma}

\begin{theorem}[Kronecker sketches from fixed-vector guarantees]
\label{thm:app_pointwise_kronecker}
Fix $\e\in(0,1)$ and $\delta_1,\ldots,\delta_d\in[0,1]$.
Let $X$ be fixed and let $S_1,\ldots,S_d$ be independent. Suppose that,
for each $j$ and every fixed $u\in\RR^{n_j}$,
\[
    \PP\left[
        \left|\|S_ju\|^2-\|u\|^2\right|
        >\frac{\e}{4d}\|u\|^2
    \right]\leq\delta_j.
\]
Then
\[
    \PP\left[
        \left|\|\mathcal S(X)\|_F^2-\|X\|_F^2\right|
        >\e\|X\|_F^2
    \right]
    \leq\sum_{j=1}^d\operatorname{rank}(X_{(j)})\,\delta_j.
\]
\begin{proof}
Use the intermediate tensors $X^{(j)}$ from
Theorem~\ref{thm:app_abstract_frobenius_jl}. The first $j-1$ mode
multiplications only form linear combinations of columns in the mode-$j$
unfolding. Hence
\[
    \operatorname{rank}\bigl((X^{(j-1)})_{(j)}\bigr)
    \leq\operatorname{rank}(X_{(j)}).
\]
Conditional on the first $j-1$ factors,
Lemma~\ref{lem:app_pointwise_to_frobenius} therefore bounds the failure
probability of step $j$ by $\operatorname{rank}(X_{(j)})\delta_j$.
The same union bound and multiplication of stepwise norm bounds complete
the proof.
\end{proof}
\end{theorem}

The two composition theorems can be used at any desired distortion
$\e$: evaluate each factor's available failure bound at $\e/(4d)$ and
substitute it for $\delta_j$. For Gaussian and Rademacher factors, the
direct Frobenius bound gives the stronger, rank-independent result above.

\paragraph{The lifted operator.}
Finally, the algebra of Kronecker products transfers unbiasedness and
operator-norm bounds from the factors to the full sketch.

\begin{lemma}[Unbiasedness and norm of Kronecker sketches]
\label{lem:app_kronecker_unbiased_norm}
For independent factors $S_j\in\RR^{k_j\times n_j}$, let
$S=S_d\otimes\cdots\otimes S_1$. If
$\EE[S_j^\top S_j]=I_{n_j}$ for every $j$, then
\[
    \EE[S^\top S]=I_{n_1\cdots n_d}.
\]
For every realization,
\[
    \|S\|=\prod_{j=1}^d\|S_j\|.
\]
In particular, whenever $\|S_j\|^2\leq c_j$ for all $j$,
\[
    \|S\|^2=\|S^\top S\|\leq\prod_{j=1}^dc_j,
    \qquad
    \|S^\top Sx\|^2\leq
    \left(\prod_{j=1}^dc_j\right)^2\|x\|^2
    \quad\text{for all }x.
\]
\begin{proof}
The mixed-product identity gives
\[
    S^\top S=(S_d^\top S_d)\otimes\cdots\otimes(S_1^\top S_1).
\]
Independence lets us take expectations factor by factor, proving
unbiasedness. The singular values of a Kronecker product are products of
the singular values of its factors, which proves the operator-norm
identity. The remaining bounds follow from
$\|S^\top S\|=\|S\|^2$ and
$\|S^\top Sx\|\leq\|S^\top S\|\|x\|$.
\end{proof}
\end{lemma}

These results separate the ingredients needed later: norm and
inner-product preservation control geometry in the compressed space,
while moments and operator norms control the lifted update. A structured
construction can be used in the high-probability convergence theorem
once its parameters provide all three sketch events specified in
Appendix~\ref{appendix:byz}.

\newpage

\newpage


\section{Byzantine-robust Analysis}\label{appendix:byz}

We prove convergence for fixed realized local and trusted objectives satisfying Assumptions~\ref{def:smooth}, \ref{as:data_sim}, and~\ref{as:reference_accuracy}. The constants in these assumptions describe the optimization instance directly; no statistical model for the trusted dataset is imposed.

\subsection{Descent and convergence}

Let $\mathcal F_t$ denote the history before sampling $S_t$, including the fixed objectives, the current iterate, and all previous randomness and messages, and write $\EE_t[\cdot]=\EE[\cdot\mid\mathcal F_t]$. The gradients $g_0^t=\nabla f_0(x^t)$, $g^t=\nabla f(x^t)$, and $g_i^t=\nabla f_i(x^t)$ for $i\in\mathcal G$ are $\mathcal F_t$-measurable. The sketch $S_t$ is independent of $\mathcal F_t$, while Byzantine messages may depend on the current sketch and honest messages. No worker has access to future sketch randomness. All probabilities and expectations below keep the objectives and the initial iterate fixed.

We first derive an estimate for the scalar product that appears in both convergence proofs.

\begin{lemma}\label{lemma2}
Let $x\in\RR^n$ satisfy the reference-accuracy inequality in Assumption~\ref{as:reference_accuracy}. Then, for every $\e\in[0,1)$,
\begin{equation}
    \<\nabla f_0(x),\nabla f(x)>-\e\|\nabla f_0(x)\|\|\nabla f(x)\|
    \geq\frac{1-\eta_1-\e^2}{2}\|\nabla f(x)\|^2-\frac{\eta_0}{2}.
\end{equation}
\begin{proof}
Let $g_0=\nabla f_0(x)$ and $g=\nabla f(x)$. By completing the square,
\begin{eqnarray*}
    \<g_0,g>-\e\|g_0\|\|g\|
    &=&\frac{1}{2}\left(\|g_0\|-\e\|g\|\right)^2
    +\frac{1}{2}\left((1-\e^2)\|g\|^2-\|g_0-g\|^2\right)\\
    &\geq&\frac{1-\eta_1-\e^2}{2}\|g\|^2-\frac{\eta_0}{2},
\end{eqnarray*}
where the last inequality follows from Assumption~\ref{as:reference_accuracy}.
\end{proof}
\end{lemma}

Further, we establish the descent lemma used for convergence both in expectation and with high probability.

\begin{lemma}\label{lemma:scetch}
Let $f_0$ have an $L$-Lipschitz gradient. Then every iteration of Algorithm~\ref{alg:grad_bant} satisfies $f_0(x^{t+1})\leq f_0(x^t)$ and
\begin{eqnarray*}
     f_0(x^{t+1}) &\leq& f_0(x^t)
     -\frac{\gamma\beta}{M}\left\langle g_0^t,\sum_{i\in\mathcal G}u_i^t\right\rangle
     +\frac{L\gamma^2\beta}{2M}\sum_{i\in\mathcal G}\|u_i^t\|^2,
\end{eqnarray*}
where $u_i^t=S_t^\top y_i^t$. For honest workers, $u_i^t=S_t^\top S_tg_i^t$.
\begin{proof}
The initial weights are non-negative and sum to one, and both branches of the weight update preserve these properties. Set
\[
    v^t=\sum_{i=1}^M\omega_i^t\mathbb I_{[\theta_i^t>0]}u_i^t,
    \qquad q_i^t=[\theta_i^t]_0,
    \qquad Q^t=\sum_{i=1}^M q_i^t.
\]
Applying convexity of the squared norm to the vectors
$\mathbb I_{[\theta_i^t>0]}u_i^t$ with weights $\omega_i^t$ gives
\[
    \|v^t\|^2\leq\sum_{i=1}^M\omega_i^t
    \mathbb I_{[\theta_i^t>0]}\|u_i^t\|^2.
\]
Applying smoothness to the actual aggregated update, we obtain
\begin{align*}
    f_0(x^{t+1})
    &\leq f_0(x^t)-\gamma\langle g_0^t,v^t\rangle
        +\frac{L\gamma^2}{2}\|v^t\|^2 
        \leq f_0(x^t)-\gamma\sum_{i=1}^M\omega_i^t
        \mathbb I_{[\theta_i^t>0]}
        \left(\langle g_0^t,u_i^t\rangle
              -\frac{L\gamma}{2}\|u_i^t\|^2\right)\\
    &=f_0(x^t)-\gamma\sum_{i=1}^M\omega_i^tq_i^t
    \leq f_0(x^t).
\end{align*}
If $Q^t>0$, the weight update gives
\begin{align*}
    \sum_{i=1}^M\omega_i^tq_i^t
    &=(1-\beta)\sum_{i=1}^M\omega_i^{t-1}q_i^t
      +\beta\frac{\sum_{i=1}^M(q_i^t)^2}{Q^t} \geq\beta\frac{\sum_{i=1}^M(q_i^t)^2}{Q^t}
    \geq\frac{\beta Q^t}{M}
    \geq\frac{\beta}{M}\sum_{i\in\mathcal G}\theta_i^t.
\end{align*}
The first inequality drops a non-negative term. The second follows from
$(Q^t)^2\leq M\sum_i(q_i^t)^2$, and the last uses
$Q^t\geq\sum_{i\in\mathcal G}q_i^t
\geq\sum_{i\in\mathcal G}\theta_i^t$.
If $Q^t=0$, then $v^t=0$ and
$\sum_{i\in\mathcal G}\theta_i^t\leq0$. Thus, in both cases,
\[
    f_0(x^{t+1})\leq f_0(x^t)
    -\frac{\gamma\beta}{M}\sum_{i\in\mathcal G}\theta_i^t.
\]
Substituting
$\theta_i^t=\langle g_0^t,u_i^t\rangle
-\frac{L\gamma}{2}\|u_i^t\|^2$
proves the claimed bound. The argument uses smoothness of $f_0$ and convexity of the squared norm, without assuming convexity of $f_0$.
\end{proof}
\end{lemma}

We are now ready to present the convergence results. First, we derive convergence in expectation. To cover the different sketches mentioned above, we introduce two assumptions.

\begin{assumption}[Unbiasedness]\label{as:sketch_unbiased}
    The lifted operator is conditionally unbiased if, for every $\mathcal F_t$-measurable vector $v\in\RR^n$,
    \begin{equation*}
        \EE_t[S_t^\top S_tv]=v.
    \end{equation*}
\end{assumption}

\begin{assumption}[Second moment boundedness in expectation]\label{as:sketch_second_moment}
    There exists $\Omega\geq1$ such that, for every $\mathcal F_t$-measurable vector $v\in\RR^n$,
    \begin{equation*}
        \EE_t\|S_t^\top S_tv\|^2\leq\Omega\|v\|^2.
    \end{equation*}
\end{assumption}

\begin{theorem}\label{th:byz_expectation}
Suppose that Assumptions~\ref{def:smooth}, \ref{as:data_sim}, \ref{as:reference_accuracy}, \ref{as:sketch_unbiased}, and~\ref{as:sketch_second_moment} hold. Let $T\geq1$ be an integer, $\beta\in(0,1]$, $\Delta_0=f_0(x^0)-f_0^{\inf}$, and choose $\gamma>0$ such that
\[
    D_{\mathrm E}\eqdef1-\eta_1-L\gamma\Omega(1+\delta_2)>0.
\]
Then, after $T$ iterations of Algorithm~\ref{alg:grad_bant},
\begin{equation}\label{eq:expectation_exact}
    \frac{1}{T}\sum_{t=0}^{T-1}\EE\|\nabla f(x^t)\|^2
    \leq\frac{2M\Delta_0}{\gamma\beta GD_{\mathrm E}T}
    +\frac{\eta_0+L\gamma\Omega\delta_1}{D_{\mathrm E}}.
\end{equation}
In particular, if $0<\gamma\leq\frac{1-\eta_1}{2L\Omega(1+\delta_2)}$, then
\begin{equation}
    \frac{1}{T}\sum_{t=0}^{T-1}\EE\|\nabla f(x^t)\|^2
    \leq\frac{4M\Delta_0}{\gamma\beta G(1-\eta_1)T}
    +\frac{2\eta_0}{1-\eta_1}
    +\frac{2L\gamma\Omega\delta_1}{1-\eta_1}.
\end{equation}
\begin{proof}

Taking conditional expectation in Lemma~\ref{lemma:scetch} and using $\frac{1}{G}\sum_{i\in\mathcal G}g_i^t=g^t$, we obtain
\begin{eqnarray*}
    \EE_t f_0(x^{t+1})
    &\leq& f_0(x^t)-\frac{\gamma\beta G}{M}\<g_0^t,g^t>
       +\frac{L\gamma^2\beta\Omega}{2M}\sum_{i\in\mathcal G}\|g_i^t\|^2\\
    &\leq& f_0(x^t)-\frac{\gamma\beta G}{2M}
       \left((1-\eta_1)\|g^t\|^2-\eta_0\right)
       +\frac{L\gamma^2\beta\Omega G}{2M}
       \left((1+\delta_2)\|g^t\|^2+\delta_1\right)\\
    &=& f_0(x^t)-\frac{\gamma\beta G}{2M}
       \left(D_{\mathrm E}\|g^t\|^2-\eta_0-L\gamma\Omega\delta_1\right).
\end{eqnarray*}
Here, the second inequality uses Lemma~\ref{lemma2} with $\e=0$ and~\eqref{eq:honest_second_moment}: Assumption~\ref{as:reference_accuracy} controls alignment, while Assumption~\ref{as:data_sim} bounds the honest squared norms. Taking full expectation, summing over $t=0,\ldots,T-1$, and using $f_0(x^T)\geq f_0^{\inf}$ gives~\eqref{eq:expectation_exact}. Under the displayed stepsize restriction, $D_{\mathrm E}\geq(1-\eta_1)/2$, which proves the last bound.
\end{proof}
\end{theorem}

For fresh Gaussian and Rademacher matrices, Lemmas~\ref{lem:unbias} and~\ref{lem:variance} give $\Omega=(n+k+1)/k$ and $\Omega=(n+k-1)/k$, respectively. The expectation is over the subsequent algorithmic randomness for fixed objectives satisfying the stated assumptions.

We now derive convergence with high probability. The proof needs three events for each fresh sketch: preservation of one inner product, control of the total compressed honest energy, and an operator-norm bound.

\begin{theorem}[Theorem~\ref{th:byz} from the main part]\label{th:byz_high_probability}
Under the assumptions and notation of Theorem~\ref{th:byz}, except for its stepsize restriction, define
\[
    c_\gamma\eqdef L\gamma\rho(1+\e),
    \qquad D_\gamma\eqdef d-(1+\delta_2)c_\gamma.
\]
More generally, for every $\gamma>0$ with $D_\gamma>0$, we have
\begin{equation}\label{eq:high_probability_exact}
    \frac{1}{T}\sum_{t=0}^{T-1}\|\nabla f(x^t)\|^2
    \leq\frac{2M\Delta_0}{\gamma\beta GD_\gamma T}
    +\frac{\eta_0+c_\gamma\delta_1}{D_\gamma}
\end{equation}
with probability at least $1-\delta$. The stepsize in Theorem~\ref{th:byz} gives~\eqref{eq:main_convergence}.
\begin{proof}
Set $q=\delta/(4T)$ and let $V_t\in\RR^{n\times G}$ be the matrix whose columns are the honest gradients $g_i^t$. Conditional on $\mathcal F_t$, both $V_t$ and the pair $(g_0^t,g^t)$ are fixed. Lemma~\ref{lem:app_one_factor_frobenius} gives
\[
    \PP\left[\|S_tV_t\|_F^2>(1+\e)\|V_t\|_F^2\mid\mathcal F_t\right]
    \leq2e^{-c_1k\e^2}\leq q
\]
when $k\geq C\e^{-2}\log(8T/\delta)$ for a sufficiently large universal constant $C$. The same lemma for a single column gives the fixed-vector JL bound with failure probability at most $q$. Applying Lemma~\ref{lem:app_inner_product} then gives
\[
    \left|\<S_tg_0^t,S_tg^t>-\<g_0^t,g^t>\right|
    \leq\e\|g_0^t\|\|g^t\|
\]
with conditional failure probability at most $2q$. Finally, Lemma~\ref{lem:op_norm_bound} gives $\|S_t\|^2\leq\rho$ with conditional failure probability at most $q$. Thus, with conditional probability at least $1-4q$, all three events hold.

On their intersection, Lemma~\ref{lemma2} implies
\begin{eqnarray*}
    \left\langle g_0^t,\frac{1}{G}\sum_{i\in\mathcal G}u_i^t\right\rangle
    &=&\<S_tg_0^t,S_tg^t> \geq\<g_0^t,g^t>-\e\|g_0^t\|\|g^t\| \geq \frac{d}{2}\|g^t\|^2-\frac{\eta_0}{2}.
\end{eqnarray*}
Moreover, using the operator norm and the Frobenius norm in this order,
\begin{eqnarray*}
    \sum_{i\in\mathcal G}\|u_i^t\|^2
    &=&\|S_t^\top S_tV_t\|_F^2 \leq\|S_t\|^2\|S_tV_t\|_F^2 \leq \rho(1+\e)\|V_t\|_F^2\\
    &\leq&\rho(1+\e)G\left((1+\delta_2)\|g^t\|^2+\delta_1\right).
\end{eqnarray*}
Substituting these two estimates into Lemma~\ref{lemma:scetch} yields
\begin{eqnarray*}
    f_0(x^{t+1})
    &\leq&f_0(x^t)-\frac{\gamma\beta G}{2M}
       \left(d\|g^t\|^2-\eta_0\right)
       +\frac{L\gamma^2\beta\rho(1+\e)G}{2M}
       \left((1+\delta_2)\|g^t\|^2+\delta_1\right)\\
    &=&f_0(x^t)-\frac{\gamma\beta G}{2M}
       \left(D_\gamma\|g^t\|^2-\eta_0-c_\gamma\delta_1\right).
\end{eqnarray*}

Let $\mathcal E_t$ be the intersection of the three sketch events at iteration $t$. Although the iterates are adaptive, their conditional failure probabilities are bounded for every history. Therefore,
\[
    \PP\left[\bigcup_{t=0}^{T-1}\mathcal E_t^c\right]
    \leq\sum_{t=0}^{T-1}\EE\left[\PP(\mathcal E_t^c\mid\mathcal F_t)\right]
    \leq4Tq=\delta.
\]
On $\bigcap_{t=0}^{T-1}\mathcal E_t$, summing the descent inequality and using $f_0(x^T)\geq f_0^{\inf}$ gives
\[
    \frac{\gamma\beta GD_\gamma}{2M}\sum_{t=0}^{T-1}\|g^t\|^2
    \leq\Delta_0+\frac{\gamma\beta GT}{2M}\left(\eta_0+c_\gamma\delta_1\right).
\]
Dividing by $\gamma\beta GD_\gamma T/(2M)$ proves~\eqref{eq:high_probability_exact}. Finally, the stepsize in Theorem~\ref{th:byz} gives $D_\gamma\geq d/2$, which proves~\eqref{eq:main_convergence}.
\end{proof}
\end{theorem}

The probability estimates involve only honest gradients and the trusted reference, all fixed before the fresh sketch is sampled. No geometric preservation event is imposed on Byzantine messages. The same proof applies to another sketch construction if it provides the three conditional events above with the stated failure probabilities and a suitable value of $\rho$; the required dimension must then be established for that construction.

\end{appendixpart}
\end{document}